\newtheorem{theorem}{\protect\theoremname}
\newtheorem{definition}{\protect\definitionname}
\newtheorem{proposition}[definition]{\protect\propositionname}
\newtheorem{corollary}[definition]{\protect\corollaryname}
\theoremstyle{definition}           
\providecommand{\corollaryname}{Corollary}
\providecommand{\claimname}{Claim}
\providecommand{\definitionname}{Definition}
\providecommand{\lemmaname}{Lemma}
\providecommand{\notationname}{Notation}
\providecommand{\remarkname}{Remark}
\providecommand{\problemname}{Problem}
\providecommand{\propositionname}{Proposition}
\providecommand{\examplename}{Example}
\providecommand{\theoremname}{Theorem}
\providecommand{\conjecturename}{Conjecture}
\providecommand{\observationname}{Observation}
\newcommand{\defword}[1]{\textbf{\boldmath{#1}}}
\newcommand{\N}{\mathbb{N}}
\newcommand{\R}{\mathbb{R}}
\newcommand{\E}{\mathbf{E}}
\newcommand{\mc}{\mathcal}
\newcommand{\actions}{\mc A}
\newcommand{\action}{a}
\newcommand{\policy}{\pi}
\newcommand{\policies}{\Pi}
\newcommand{\playerCount}{N}
\newcommand{\playerSet}{\mc N}
\newcommand{\playerFunction}{p}
\newcommand{\pl}{i}
\newcommand{\plAlt}{j}
\newcommand{\opp}{\others}          
\newcommand{\others}{{\textnormal{-}\pl}}
\newcommand{\Plone}{\textnormal{P1}}
\newcommand{\histories}{\mc H}
\newcommand{\history}{h}
\newcommand{\leaf}{z}
\newcommand{\leaves}{\mc Z}
\newcommand{\reachProb}{P}
\newcommand{\utility}{u}
\newcommand{\reward}{r}
\newcommand{\NE}{\textnormal{NE}}                     
\newcommand{\val}{v}
\newcommand{\maxminValue}{\underline{\val}}                     
\newcommand{\minmaxValue}{\bar{\val}}                     
\newcommand{\game}{{\mc{G}}}
\newcommand{\EFG}{{\mc E}}
\newcommand{\NFG}{{\mc G}}
\newcommand{\simProb}{p}
\newcommand{\repetitionCount}{T}
\newcommand{\weight}{w}
\newcommand{\repeated}{\textnormal{Rep}}
\newcommand{\lastOnly}{\textnormal{last}}
\newcommand{\unknownHorizon}{\textnormal{T=?}}
\newcommand{\repeatedUnknown}{\repeated_\unknownHorizon}
\newcommand{\repeatedLastOnlyUnknown}{\repeated_\unknownHorizon^\lastOnly}
\newcommand{\PD}{\textnormal{PD}}
\newcommand{\recursive}{\textnormal{RJS}}
\newcommand{\simsBelow}{K}
\newcommand{\simsAbove}{D}
\begin{document}

\title[Recursive Joint Simulation in Games]{Recursive Joint Simulation in Games}

\author[1,2,3]{\fnm{Vojtech} \sur{Kovarik}}\email{vojta.kovarik@gmail.com}

\author[1]{\fnm{Caspar} \sur{Oesterheld}}\email{oesterheld@cmu.edu}

\author*[1]{\fnm{Vincent} \sur{Conitzer}}\email{conitzer@cs.cmu.edu}

\affil*[1]{\orgdiv{Foundations of Cooperative AI Lab (FOCAL), Computer Science Department}, \orgname{Carnegie Mellon University}, \orgaddress{\street{5000 Forbes Avenue}, \city{Pittsburgh}, \postcode{15213}, \state{Pennsylvania}, \country{United States}}}

\affil[2]{\orgdiv{AI Center}, \orgname{Czech Technical University}, \orgaddress{\street{Jugoslávských partyzánů 1580/3}, \city{Prague}, \postcode{160 00}, \country{Czech Republic}}}

\affil[3]{\orgdiv{Center for Theoretical Study}, \orgname{Charles University}, \orgaddress{\street{Ovocný trh 560/5}, \city{Prague}, \postcode{116 36}, \country{Czech Republic}}}
%


\abstract{
Game-theoretic dynamics between AI agents could differ from traditional human--human interactions in various ways.
One such difference is that it may be possible to accurately simulate an AI agent, for example because its source code is known.
Such an agent would then be fundamentally uncertain whether it is in the real world or in a simulation.
Our aim is to explore ways of leveraging this possibility to achieve more cooperative outcomes in strategic settings.
In this paper, we study an interaction between AI agents where
    the agents run a recursive joint simulation.
That is,
    the agents first jointly observe a simulation of the situation they face.
This simulation in turn recursively includes additional simulations
    (with a small chance of failure, to avoid infinite recursion),
    and the results of all these nested simulations are observed before an action is chosen.
We show that the resulting interaction is strategically equivalent to an infinitely repeated version of the original game,
    allowing a direct transfer of existing results such as the various folk theorems.
As evidence that the equivalence is robust, we show that it holds even when we relax some of the assumptions and that it also holds ``from the inside'' -- meaning, for an agent that finds itself inside the game and has self-locating uncertainty.}

\keywords{Self-locating Beliefs, Simulation, AI Agents, Repeated Games, Folk Theorem, Cooperative AI}

\maketitle

\section{Introduction}\label{sec:intro}

In the literature on the foundations of decision theory and game theory,
we often consider settings that we are unlikely to encounter in our daily lives.
For example,
    we consider variants of Newcomb's paradox \citep{Nozick1969,ahmed2018}, in which our choices have been accurately predicted;
    or problems such as Sleeping Beauty \citep{Elga2000}
    and the absentminded driver \citep{PICCIONE19973}, in which we forget some past observations.
    While we believe such scenarios provide important insights for getting the foundations right,
    there are two concerns that one may have about the study of such scenarios.
One is that, because the scenarios are not fully grounded in details (how, exactly, is Newcomb's demon able to predict so well?), they may simply be incoherent.
Second, even if a scenario does reflect (or is at least close to reflecting) an important aspect of decision and game theory, due to the unfamiliar nature of such scenarios, relying only on our intuitions about them may be misleading.

An alternative route is to study scenarios whose details are clear, but involve atypical agents such as AIs.
    (We use the term ``AI agent'' loosely here, without invoking a particular theory of agency.
    It could refer, for example, to a Python script, a large language model, or a hypothetical future AI.\footnotemark{})
    {\em AI agents} are likely to find themselves in situations that are rarely encountered by humans, animals, and collectives thereof.
For example, one can run multiple copies of an AI, erase its memory, or inspect its source code
    (cf.\ \citealt{Conitzer19:Designing}; \citealt{oesterheld2021approval}; \citealt{cavalcanti2010causation}, Sect.\ 5).
The access to the AI's source code would allow us to observe its behaviour in simulated environments.
And if these simulations resemble the AI's typical environment, they can even induce fundamental uncertainty in the agent regarding whether it is being simulated or not (\citealt{Conitzer23:Foundations}).
    \footnotetext{
        In other words, we will think of ``AI agents'' simply as decision-making systems that can be instantiated on a computer.
        For our analysis, what matters is that:
            (i) The AI agent can act in the decision scenario we study.
            (ii) The AI agent can be faithfully simulated by running (a copy of) their underlying computation.
            (iii) If the simulation is constructed sufficiently well, the AI agent will not be able to distinguish it from reality.
        (The third feature can sometimes be problematic, and we discuss it further in \Cref{sec:objections}.)

        In particular, our results are not meant to imply that AI agents will behave according to the predictions of game-theoretic rationality.
        Rather, we only claim that \textit{if} there is an AI agent which behaves rationally, then certain things will be true about it.
        That said, we have already seen that AI can act rationally in some settings such as Go \citep{alphago} or poker \citep{Pluribus}, which shows that this assumption is worth entertaining.
    }

A benefit of studying such scenarios is that they {\em can be} fully grounded in detail, thereby addressing at least the first concern above. Of course, the second concern still applies: to understand strategic interactions between AI agents, we must consider scenarios that
    challenge our human intuitions and go beyond the scope of problems typically considered in game theory.

These scenarios are not necessarily entirely novel; in some cases, closely related scenarios have received significant attention in the 
formal epistemology literature.
For example, the literature on self-locating beliefs, and specific example scenarios such as Sleeping Beauty and the absentminded driver, are directly relevant for scenarios in which an AI agent's memory is erased, or it knows itself to be one of multiple copies that are in the same epistemic state.
In other cases, the study of scenarios faced by AI agents surfaces new problems altogether.
Scenarios with agents that can read each other's source code, and concepts motivated by this setup such as program equilibrium~\citep{McAfee1984,Howard1988,Rubinstein1998,Tennenholtz2004} and translucent agents \citep{halpern2018game}, constitute a good example of this; while they are ostensibly related to scenarios such as Newcomb's paradox, the details are different and precise, and raise new issues.
The scenario that we study in this paper falls primarily in the latter category.
However, it also makes connections to problems in the first category, including those of self-locating beliefs.
Consequently,
    we believe that our results provide motivation for and help to clarify our thinking about key examples in the foundations of decision and game theory generally,
    in particular drawing new connections between them and other well-studied topics (in our case, specifically, repeated games);
    as well as that they shed light on how to think about AI agents interacting strategically.
We believe that the latter is an important topic in its own right, and one to which researchers in formal epistemology and related areas are especially well placed to make valuable contributions.
(We discuss the most-closely related literature in greater detail in Section~\ref{sec:related-work}, and discuss topics related to the possibility of creating simulations indistinguishable from reality in Section~\ref{sec:objections}.)

\medskip
In this paper, we investigate the implications of the potential ability for AI agents
    to \emph{deliberately induce self-locating doubt}
    by \emph{simulating} each other during strategic interaction.
One might hope that the ability to simulate each other might allow for new mechanisms of establishing trust and cooperation between agents. Roughly, one agent might simulate its opponent to determine whether the opponent is cooperative, and then cooperate if the simulation indicates that the opponent is cooperative. Knowing that the opponent runs such simulations, an agent is thus incentivised to behave cooperatively, because it considers it sufficiently likely that it is in a simulation and wants the opponent to cooperate in the real world.\footnote{Note that the agent's uncertainty about whether it is in a simulation or not is self-locating in nature; it has no uncertainty about the way the world, and simulation of agents in particular, works.  In this way, its question whether it is in a simulation is entirely unlike the question of whether we humans are in a simulation ourselves; if we found out that we were, it would drastically change how we see the world, not just our position in it.  We will return to this point at the end of the paper.}
In this paper, we provide a rigorous analysis that sheds light on whether these intuitions hold true.
\medskip

\begin{figure}[tb]
    \centering
    \includegraphics[width=0.45\textwidth]{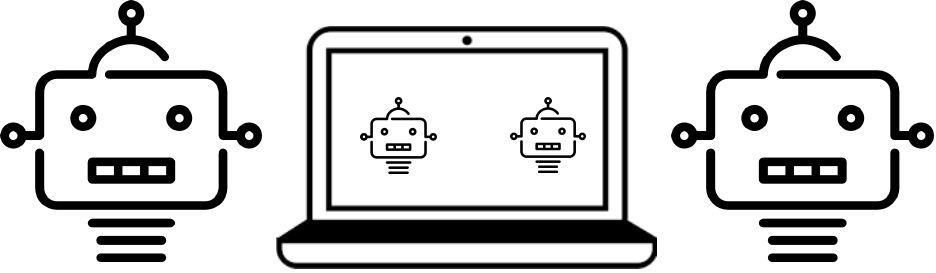}
    \caption{A game with a single (non-recursive) joint simulation.}
    \label{fig:laptop-no-recursion}
\end{figure}

As a concrete illustrative scenario, consider two AI agents facing a single-shot Prisoner's Dilemma.
As is well known, standard game theory recommends each of the two agents individually to defect.
This outcome of mutual defection is undesirable for both agents because both would strictly prefer mutual cooperation over mutual defection.
Suppose that, in an attempt to remedy the situation, we give the agents access to a \textit{joint-simulation device};
    we can imagine this as a physical device that
        obtains the agents' source code,
        runs a simulation of their interaction in the Prisoner's Dilemma,
        and displays it on a screen (\Cref{fig:laptop-no-recursion}).

As a naive version of this setting,
   suppose first that the agents in the simulation do not in turn have access to a joint-simulation device.
    That is, the two simulated agents select their actions as in a normal Prisoner's Dilemma, without knowing their opponent's choice of action.
We illustrate this in \Cref{fig:laptop-no-recursion}.
Overall the scenario looks as follows:
    The two AI agents show up, and a computer automatically runs a simulation of the AI agents interacting with each other in a Prisoner's Dilemma.
    The real AI agents observe the whole simulation and its outcome -- say, (Cooperate, Cooperate).
    Afterwards,
        the real agents choose whether to cooperate with each other or not.
    These actions then determine the agents' utility
        (according to the payoff matrix of the standard Prisoner's Dilemma).
In this setting,
    the agents' strategies can follow rules such as:
    when I am outside of the simulation and the opponent cooperated (resp.~defected) inside the simulation, I cooperate (resp. defect).

Unfortunately, mutual defection in the real interaction is still the only equilibrium outcome of this new setting.
    This is because when the agents see a simulation, they know they are in reality, and that their choice of action will not affect the opponent's choice.
In this sense, the situation is analogous to a twice-iterated Prisoner's Dilemma\footnotemark{}:
    the first round corresponds to the simulation and the second round to the real interaction.
    \footnotetext{
        More precisely, the scenario with joint simulation corresponds to a twice-iterated Prisoner's Dilemma \emph{where only the outcome of the second round matters for payoffs}.
    }
\medskip

\begin{figure}[!tb]
    \centering
\begin{verbatim}
RJS-hist(pi1, pi2, p_refuse):
    with probability p_refuse:
        history_below = []
    otherwise:
        history_below = RJS-hist(pi1, pi2, p_refuse)  
    a1, a2 = pi1(history_below), pi2(history_below)
    return history_below + [(a1, a2)]

RJS-payoffs(pi1, pi2, payoff_matrix, p_refuse):
    hist = RJS-hist(pi1, pi2, p_refuse)
    #the payoffs are computed from the "last" actions:
    return payoff_matrix[hist[-1][0]][hist[-1][1]]
\end{verbatim}
    \caption{A description of the main setting of this paper in pseudocode.}
    \label{fig:pseudocode}
    \vspace{1.25em}
    \includegraphics[align=c, width=0.35\textwidth]{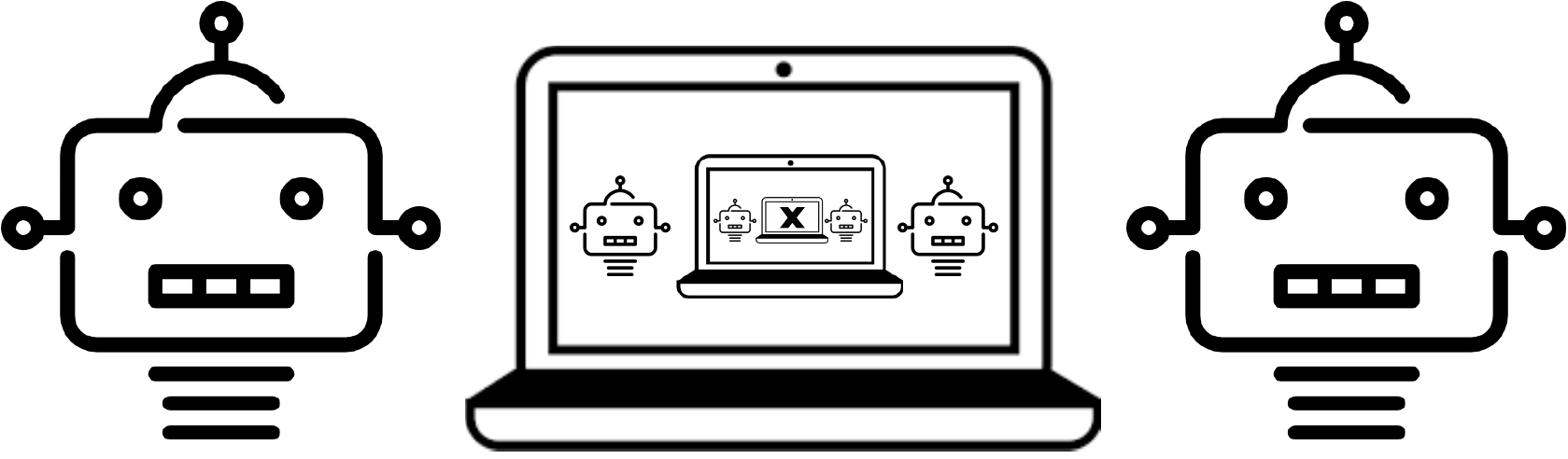}
    \ 
    \includegraphics[align=c, width=0.6\textwidth]{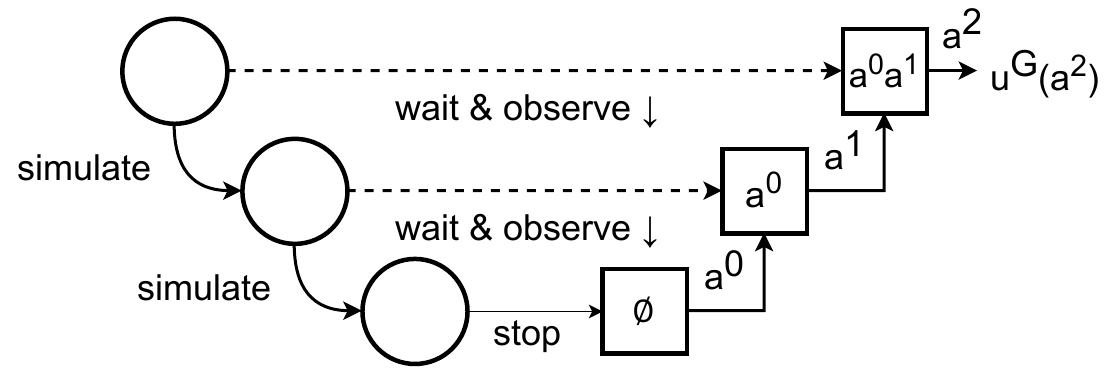}
    \caption{A playthrough of a recursive joint-simulation game during which there were two levels of nested simulation.
        \textbf{Left:} Visualisation from the point of view of the non-simulated agents.
        \textbf{Right:} A diagram of the playthrough.
            Circles indicate chance nodes, squares indicate decision nodes.
            The text inside the squares corresponds to the information state of the agents.
    }
    \label{fig:laptop}\label{fig:playthrough}
    \vspace{1.75em}
    \includegraphics[align=b,width=0.45\textwidth]{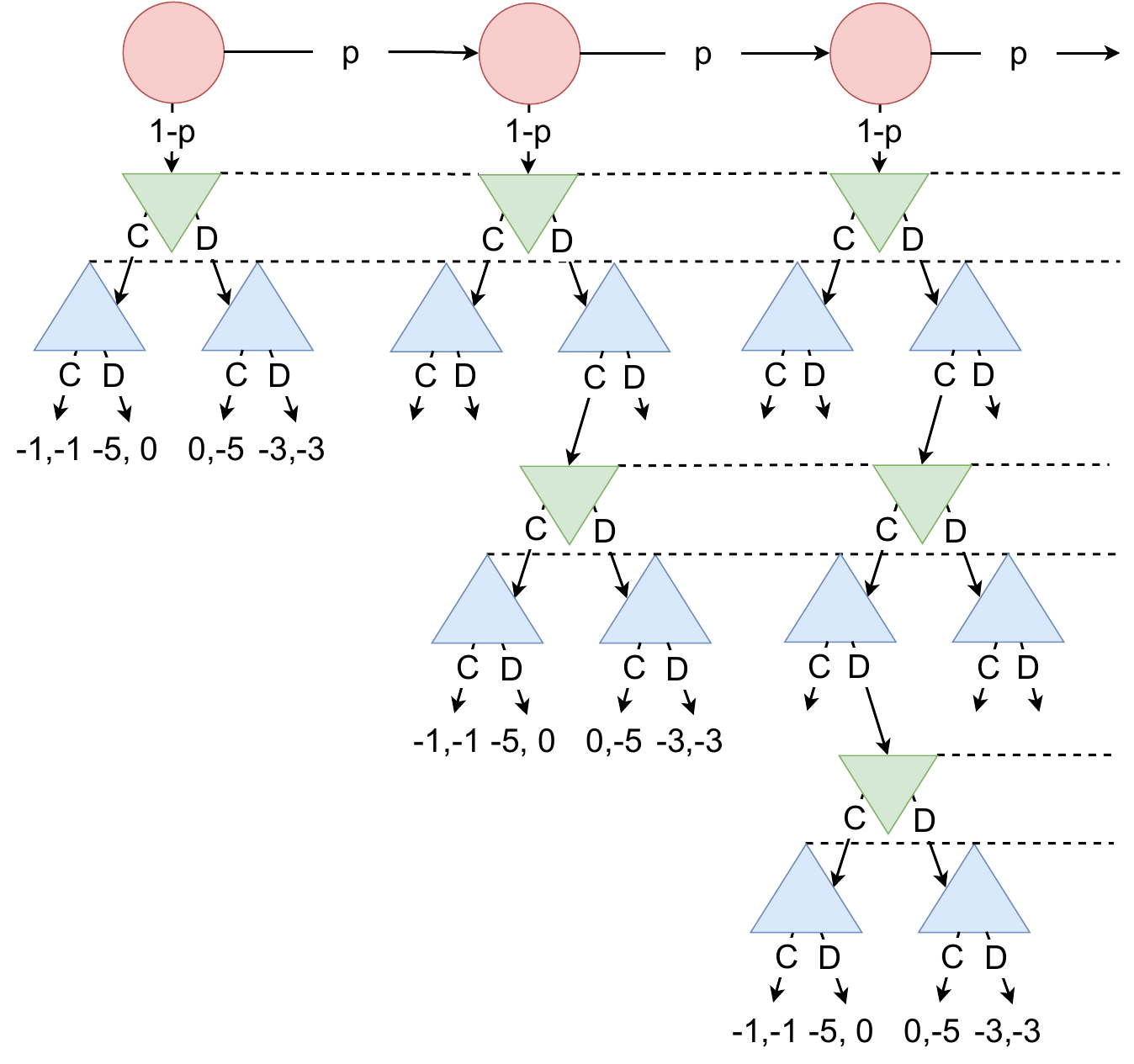}
    \caption{Extensive-form representation of $\recursive(\NFG, \simProb)$ for $\NFG = $ Prisoner's Dilemma.
        The symbols $\circ$, $\nabla$, $\Delta$ denote
            the chance determining whether to run another sub-simulation, resp.
            the agents' decision nodes.
        The dashed lines indicate information sets.
            (For example, the top green information sets consist of nodes where \Plone{} observes that no further simulation is done and must now play immediately.
            On the next green level just below, \Plone{} sees that exactly one simulation took place, resulting in actions $(D,C)$.)
        To make the drawing manageable, we only draw one of the four possible outcomes at every branching point.
            (For example, in the second-lowest row, the full game tree should contain $4^2 = 16$ green $\nabla$-nodes.)
    }
    \label{fig:efg-of-RJS}
\end{figure}

As a more serious attempt to facilitate cooperation,
    imagine we use a \textit{recursive} joint-simulation device.
By this, we mean a device that creates simulations within which the simulated agents also have access to recursive joint simulation,
    and so do the simulated agents in any of the resulting sub-simulations.
(As a clarification, note that the payoffs only depend on the actions of the real agents.\footnote{This means that the simulated agents do not care about what happens in their own simulated world per se; they only care about the impact that they have on the real world. In this, we do not mean to take a stance on whether we should ever care about what happens in virtual realities. \cite{Chalmers22:Realityplus} argues that we should, but this is beside the point of this paper: here, we are interested in the possibility of establishing cooperation among AI agents that by assumption try to achieve something in the real world, by simulating them --- as opposed to creating virtual realities with inherent meaning and value of their own.})
An obvious issue with this setup is the creation of an infinite chain of sub-simulations that never terminate.
To prevent this, we require that
    every time the device attempts to run a simulation (or a sub-simulation),
    there is a built-in chance, say 1\%, that the attempt will be refused.
As a more rigorous illustration of this setting,
    consider the pseudocode of the recursive function in \Cref{fig:pseudocode}
    and
    the extensive-form game in \Cref{fig:efg-of-RJS}.

From the point of view of any of the agents (real, as well as simulated), this setting looks as follows:
    Sometimes, the agent is asked to act immediately (because the device refused to simulate).
    Other times, the agent is first shown a sequence of $k$ nested simulations
        (where the agents in the $i$-th of these simulations see $k-i$ sub-simulations before selecting their action,
        as in \Cref{fig:laptop}, left).
From the outside view,
    the scenario looks as in \Cref{fig:laptop}, right.
    That is, the simulation device spawns a nested sequence of simulations and eventually asks one pair of agents to act immediately.
    While this is happening, the remaining agents are waiting and watching the sub-simulations ``below'' them.
    As the simulations resolve, their outcomes propagate ``upwards'',
    until eventually the simulation shown to the real agents concludes, the real agents choose their actions, and this choice determines the payoffs.
In this sense, recursive joint simulation can be viewed as a ``temporal inverse'' of a repeated game\footnotemark{},
    with the deepest simulation, which starts last, corresponding to the first round.
    (This will be formalised in \Cref{thm:equivalence}.)
    \footnotetext{Or more precisely, a repeated game where with a possibly uncertain number of rounds, where only the last round counts towards payoffs. For more details, see \Cref{prop:variableSimProb}.}
    
The recursive nature of the simulation device introduces a crucial new consideration:
    Because the simulations look identical to reality,
    each of the agents -- simulated and real -- now suspects that they might be simulated.
    This creates a tradeoff between
        seeking a direct payoff in the hope that the agent is in reality
        and adopting an indirect strategy under the assumption that the agent is in a simulation.
Because of this difference,
    recursive joint simulation allows the agents to reach mutual cooperation in equilibrium.
One way of achieving this is when both agents adopt a grim trigger strategy,
    where they start out cooperating (when the simulation device refuses to simulate)
    and cooperate as long as their opponent cooperates in all sub-simulations,
    but defect whenever at least one of the sub-simulations shows the opponent defecting.
    Since defection only pays off in reality
        (and any given instance of the agent is overwhelmingly likely to be in a simulation),
        both agents adopting this strategy is an equilibrium.
    Unfortunately, grim trigger can also be used to extort the other agent into unfair outcomes
        (for example, requiring one agent to accept a defect-cooperate outcome whenever the number of sub-simulations is divisible by 10).

\subsection*{Outline}

The rest of this paper is structured as follows.
In \Cref{sec:related-work}, we review related work.
The key technical prerequisites are presented in \Cref{sec:background}, with further background in Appendix~\ref{sec:app:efgs}.
In \Cref{sec:recursive-joint-simulation},
    we present the formalism for recursive joint-simulation games
    and detail their equivalence to repeated games.
We start with the simplest setup,
    where each decision to simulate is made  with fixed probability,
    and show that it is equivalent to an infinitely repeated game with exponential discounting (\Cref{thm:equivalence}).
We also give two further pieces of evidence that point towards the robustness of this equivalence.
    In \Cref{sec:self-locating-beliefs}, we show that the equivalence between these settings continues to hold ``from the inside''.
        That is, we discuss how an agent that finds itself in the game, possibly some depth into the recursive simulation, forms self-locating beliefs and reasons on the basis of these (instead of reasoning ex-ante). 
        We show how these beliefs correspond to the beliefs an agent would hold when playing a repeated game, thereby establishing equivalence in this case as well.
    In Appendix~\ref{sec:generalisations}, we establish an equivalence between the setting where the simulation probability varies over time and infinitely repeated games with possibly non-exponential discounting.
    This in particular implies an equivalence between recursive joint simulation with limited simulation budget and finitely repeated games (with an uncertain number of rounds and where only the last round counts towards payoffs).
Finally, we discuss some of the limitations of our approach in \Cref{sec:objections} and conclude the paper in \Cref{sec:conclusion}.
The detailed proofs are presented in Appendix~\ref{sec:app:proofs}.

\section{Related work}\label{sec:related-work}

In the introduction,
    we argued that decision scenarios featuring AI agents provide a promising setting for studying the foundations of decision theory.
In this section,
    we lay out some of the existing research directions which study how strategic interactions are affected by features unique to interactions between AI agents (as opposed to humans), such as mutual simulation \citep{Conitzer23:Foundations}.
We also discuss how this prior work compares to the present text.

A familiar difficulty  in game theory is that interacting agents have to reason about each other, which potentially requires navigating an infinite recursion.
AI agents bring a novel twist to this: they can potentially be entirely transparent to each other.  This not only allows for such reasoning to be entirely accurate, but also allows for the way that one agent makes decisions to directly influence how the other does.
One formalism which allows us to study these types of reasoning is that of \textit{program equilibria}.
In this setting, 
each of two (or more) players submits a computer program to play a game on their behalf, and each of the computer programs can read the source code of the other player's programs.
    This program game setting is similar to our recursive joint simulation setting in that it allows the programs to obtain information about their opponent's decision procedure before deciding whether to, for example, cooperate or defect.
    The program game model differs from ours in two ways:
        First, the type of information and the way it is obtained is different between the two settings: in program games, the programs receive access to the other's source code\footnotemark{} -- any inferences, for example, about whether the opponent ever cooperates have to be made by the programs themselves; in our recursive joint-simulation setting, the players directly receive \textit{behavioural} information about each other.
        Second, the program game setting typically doesn't model the programs -- the agents about whom other agents receive information -- as game-theoretic, rational players. Instead, only the original players (who are submitting the computer programs and who aren't themselves being simulated or the like) are modelled as players.
        \footnotetext{
            A lot of the literature on program equilibria predates the rise of large language models (LLMs), so we should not necessarily expect all of it to be \textit{directly} applicable to them.
            Still, one way of applying these results to LLMs would be to view LLMs as programs whose output is determined by their architecture, weights, initial prompt, sampling parameters (such as temperature), etc.
        }

Like our setting, this program game setup makes it possible to achieve cooperative equilibria in, for instance, the Prisoner's Dilemma.
For example, one can use a program that checks whether the other players' programs are identical to itself, and cooperate if and only if this is the case (\citealt{McAfee1984}; \citealt{Howard1988}; \citealt[][Sec.~10.4]{Rubinstein1998}; \citealt{Tennenholtz2004}).
    Clearly, it is an equilibrium for the 
    original players
    to all submit this program, because any deviation from it will result in defection from the other players.
    \citet{Tennenholtz2004} shows that this approach can be generalised to prove a folk theorem for program games \citep[cf.][Sect.~10.4]{Rubinstein1998}.
    Because the approach compares the players' programs syntactically (rather than behaviourally), there's no analogous approach in our setting.
    The approach, however, has its downsides:
        For one, it requires direct access to the other programs' full source code.\footnote{
            In practice, it should be possible to achieve cooperation without sharing \textit{all} of one's source code with the other player.
            To give an analogy: communicating that I don't have any plans to betray the other player should not require sharing the password to my bank account.
        }
        Second, it requires exceptionally careful coordination on the precise program to use.

Subsequent work in the program equilibrium literature
    has proposed ways to make the cooperative equilibria less sensitive to the choice of program,
    and also to rely more on analysing the opponent program's \textit{behaviour} \citep{barasz2014robust,critch2022cooperative, RobustProgramEquilibrium,oesterheld2022note,oesterheld2022similarity}. Of these approaches,
\citeauthor{RobustProgramEquilibrium}'s (\citeyear{RobustProgramEquilibrium}) $\epsilon$-grounded FairBot is most similar to ours, because it relies only on \textit{simulating} the other program. 
    The basic version of the idea, for a simple 2-player Prisoner's Dilemma, is an instantaneous version of tit-for-tat: simulate what the other program does (against {\em this} program); then do the same thing to it.
    This does not quite work, because if this is what both players try to do, then the first program simulates the second program simulating the first program simulating the second program simulating... And this recursion never ends.
    To address this, one can add a line at the beginning of the program that says to, with probability $\epsilon$, just cooperate.
    The main difference between the $\epsilon$-grounded FairBot strategy and our cooperative strategies for RJS games is that in our settings, the players run a \textit{joint} simulation, as opposed to each separately simulating their opponent.
    Using joint versus separate simulations results in different game-theoretic analyses. In particular, the use of RJS can address some limitations of the $\epsilon$-grounded FairBot approach.
    Whereas we prove a folk theorem for the RJS setting, 
    the $\epsilon$-grounded FairBot is insufficient for achieving as broad a set of cooperative outcomes as achieved by folk theorems for repeated games. In fact, for ${>}2$ players, to our knowledge, no full folk theorem (paralleling folk theorems for repeated games) has been proved for \textit{any} robust approaches to program equilibrium.

\citet{Elga2004} studies a setting in which agents can (similar to $\epsilon$-grounded FairBots) place each other in (non-joint) simulations (without providing formal analysis of the general strategic dynamics).
    Elga's work is also motivated by getting an agent (``Dr.~Evil'') to cooperate, but relies on threatening the agent in the simulation with torture.
    This is ineffective when the agent only cares about what happens in the real world (even if the agent is itself in a simulation); and this more challenging setting is the one we study here.
    For example, it is natural for an AI agent's designer to care only about what happens in the real world and thus to instruct the AI to act accordingly.
\citet{GTwithSimulation} study a setting in which one player can (pay a cost to) simulate the other and best responds using the knowledge of the other player's action. Because only one player simulates the other, there is no need for recursive simulations. The game theoretic dynamics are very different from the dynamics studied in the present paper.

\medskip

A more distantly related idea is that of cooperating in a Prisoner's Dilemma against a similar opponent, as discussed in the literature on Newcomb's problem \citep{Brams1975,Lewis1979} or on the subjectivistic approach to game theory \citep{frahm2019rational}. Roughly, the idea is that an agent should reason as follows: Because my opponent is similar to me, they are likely to choose the same action that I choose. Therefore, if I cooperate, my opponent will likely also cooperate; and if I defect, my opponent will likely defect. Hence, I should cooperate. Following \citet{Hofstadter1983} (and lacking established alternative terminology), we refer to this reasoning as \textit{superrationality}.
Superrationality is similar to mutual simulation in that both ideas can be motivated by departing from the standard game-theoretic conception of an agent. 
Indeed, program equilibrium has sometimes been proposed as a model of superrationality \citep{Howard1988,barasz2014robust,oesterheld2022similarity}.
However, the mechanism underlying superrational cooperation is different from the cooperative mechanism studied in the present paper.
In particular, the reason to cooperate in our setting is that one's cooperation or defection might be observed (albeit unconventionally via simulation) by the opponent.
In contrast, in superrationality, the reason for cooperating is a direct (i.e., unmediated by observations or the like) but non-causal link between one's own and the opponent's choice.
One consequence of the difference in mechanism is that the mechanism described in this paper does not hinge on controversial decision-theoretical views.
In contrast, superrational cooperation generally hinges on resolving Newcomb's problem in favour of one-boxing rather than two-boxing
    (and decision theorists disagree about whether one- or two-boxing is rational, see, for example, \citealt{bourget2014philosophers}; \citealt{ahmed2018}; \citealt{sep-decision-causal}, Sect.\ 2.1, 2.5).

Finally, recursive joint simulation is also adjacent to the notion of correlated equilibria in game theory~\citep{aumann1974subjectivity}.
    (Recall that in this setting, the players are assumed to have access to some correlation device -- correlated random variables that they each observe, such as  traffic light signals.
    They  follow some  policy conditional on these variables
        -- for example, stopping if the light shows red.
    If nobody is incentivised to deviate from this policy -- as is the case in this example -- then the policy is referred to as a correlated equilibrium.)
    Recursive joint simulation is similar in several respects:
        First, a recursive joint simulation device can be viewed as a source of randomness, in that it determines how many layers of simulation are realised.
        Second, the most likely use of this device is also to achieve a cooperative outcome that improves on the default equilibrium of the game.
    However, as we will see below (\Cref{thm:equivalence}), recursive joint simulation can allow for cooperative outcomes in situations where standard correlation devices are insufficient.  For example, it will allow for cooperation in the Prisoner's Dilemma, even though the only correlated equilibrium of the Prisoner's Dilemma is for both players to always defect.
This is because unlike in a regular correlated equilibrium, with recursive joint simulation an agent must consider that it may be in a simulation, in which case its choice of action can affect the {\em other} agent's action in the real world.  No such effect is present in the standard correlated equilibrium framing.
    

\section{Background}\label{sec:background}

\subsection{Normal-Form Games}
An $\playerCount$-player \defword{normal-form game} (NFG)
    is a pair $\NFG = (\actions, \utility)$
    where
        $\actions = \actions_1 \times \dots \times \actions_\playerCount \neq \emptyset$ is a finite set of \defword{actions} 
        and $\utility = (\utility_1, \dots, \utility_\playerCount) : \actions \to \R^\playerCount$ is the \defword{utility function}.
    (For clarity, we sometimes write $\utility^\NFG$ instead of $\utility$.)
For finite $X$, $\Delta(X)$ denotes the set of all probability distributions over $X$.
A \defword{strategy} (or policy) \defword{profile} is a tuple $\policy = (\policy_1, \dots,  \policy_\playerCount)$ of \defword{strategies} $\policy_\pl \in \Delta(\actions_\pl)$.
We denote the set of all strategies as $\policies(\NFG) := \policies = \policies_1 \times \dots \times \policies_\playerCount$.
We use $-\pl := \{ 1, \dots, \playerCount \} \setminus \{ \pl \}$ to denote the objects belonging to the other players --- for example, we can write $\policy = (\policy_\pl, \policy_\opp)$.
For $\policy \in \policies$,
    $\utility_\pl(\policy)
        := \sum_{(\action_1, \dots, \action_\playerCount) \in \actions}
            \prod_{\plAlt = 1}^\playerCount
                \policy_\plAlt(\action_\plAlt)
            \utility_\pl(\action_1, \dots, \action_\playerCount)
    $
    is the \defword{expected utility} of $\policy$.

A strategy $\policy_\pl$ is said to be a \defword{best response} to $\policy_\opp$ if
    $\policy_\pl \in \arg \max_{\policy'_\pl \in \policies_\pl} \utility_\pl(\policy'_\pl, \policy_\opp)$.
A \defword{Nash equilibrium} (NE)
    is a strategy profile $\policy$ under which each player's strategy $\policy_\pl$ is a best response to the strategy $\policy_\opp$ of the other players.
    We use $\NE(\NFG)$ to denote the set of all Nash equilibria of $\NFG$.
The \defword{maxmin} and \defword{minmax} values for player $\pl$ are
$
\maxminValue_\pl
    :=
    \max_{\policy_\pl} \left(
        \min_{\policy_\opp}
            \utility_\pl (\policy_\pl, \policy_\opp)
    \right)
$
and
$
\minmaxValue_\pl
    :=
    \min_{\policy_\opp} \left(
        \max_{\policy_\pl}
            \utility_\pl (\policy_\pl, \policy_\opp)
    \right)
$.
    A strategy of player $\pl$ that guarantees $\maxminValue_\pl$ is called their maxmin strategy.
    A strategy $\policy_\opp$ that ensures player $\pl$ gets no more than $\minmaxValue_\pl$ is called ($\opp$'s) minmax strategy.

Note that while this terminology allows us to talk about the standard notion of game-theoretic rationality,
    we do not mean to imply that AI agents are automatically rational in this sense.
    While some future AIs might be rational, it is certainly also possible to construct AI agents which have inconsistent preferences, imperfect memory, etc.

\subsection{Repeated Games}
We now overview the key types of repeated games \citep{maschler2020game} and their basic properties.
    In general, all these games can be formally implemented using the EFG formalism (Appendix~\ref{sec:app:efgs}).

Let $\NFG = (\actions, \utility^\NFG)$ be an NFG.
In a finitely-repeated version of $\NFG$, denoted $\repeated_\repetitionCount(\NFG)$,
    the players play $\NFG$ a total of $\repetitionCount$ times
    and observe their opponents' actions after each round.
    The aim of the play is to maximise the sum of utilities over all rounds.
    To simplify comparisons to other settings, we normalise the utilities by dividing them by $\repetitionCount$.
An infinitely repeated version of $\NFG$, denoted $\repeated_\omega(\NFG, \simProb)$, works analogously, except that na\"ively taking the sum of the utilities from the stage games could cause the total utility to be ill-defined.
    To avoid these complications, we use exponential discounting
        --- i.e., when summing the utilities, we multiply the utility from the $t$-th stage game by
            $
                \simProb^t / (\simProb^0 + \simProb^1 + \dots)
                =
                \simProb^t (1-\simProb)
            $.\footnote{
                The discount factor is often denoted as $\gamma$.
                However, in this paper, the discount factor is often equal to the simulation probability $\simProb$ (introduced later), so we also use $\simProb$ here to be consistent.
            }
    More generally, we could consider $\repeated_\omega(\NFG, (\weight_t)_{t=0}^\infty)$
        for an arbitrary sequence of weights $\weight_t \geq 0$
        with $\sum_{t=0}^\infty \weight_t < \infty$.
Finally, we can also consider games $\repeatedUnknown(\NFG, \simProb)$ with an unknown number of repetitions, where after each stage, the game terminates with a probability $1-\simProb$ and continues with a probability $\simProb$.
    As before, we normalise the utilities by multiplying them by $1-\simProb$.
    And analogously, we could consider games $\repeatedUnknown(\NFG, (\simProb_t)_{t=0}^\infty)$ where the continuation probability varies over time.

There is a well-known equivalence between the uncertain-length setting $\repeatedUnknown(\NFG, \simProb)$ and $\repeated_\omega(\NFG, \simProb)$.
    To express it more formally, we say that two games are
        (i) \defword{strategically equivalent} if any strategy profile yields the same expected utilities in both games;
        (ii) \defword{realisation-equivalent} if any strategy profile induces the same probability distribution over utility profiles.\footnote{
            Strictly speaking, this definition requires a mapping between the sets of strategies in the two games.
            However, in all cases we consider, there exists a canonical mapping that is either obvious or described in detail in the proof.
        }
In this terminology, $\repeatedUnknown(\PD, \simProb)$ and $\repeated_\omega(\PD, \simProb)$
    are strategically equivalent,
    but not realisation-equivalent.

An important property of repeated games is that they allow for a much wider range of outcomes to be attained in equilibrium:

\begin{proposition}[Folk theorem for repeated games; e.g.,\,\citealt{Osborne2004}, Prop.\,454.1]\label{prop:folk-thm-classic}
Let $\NFG$ be any NFG.
Let $v$ be a feasible payoff vector s.t.~for all players $\pl$ we have that $\val_\pl > \minmaxValue_\pl$.
Then
there exists $\simProb_0 \in [0, 1)$
s.t.~for every $\simProb \in [\simProb_0, 1)$,
    there is some $\policy \in \NE(\repeated_\omega(\game, \simProb))$
    s.t.~$\forall \pl : \utility_\pl(\policy) =  \val_\pl$.
\end{proposition}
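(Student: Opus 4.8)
The plan is to prove the folk theorem by an explicit \emph{grim-trigger} construction with minmax punishments, and then to verify the Nash condition by a one-period-deviation bound. Throughout, fix a feasible target $v$ with $\val_\pl > \minmaxValue_\pl$ for all $\pl$, and for each $\pl$ let $\hat\policy_\opp$ denote a minmax strategy profile of the other players, i.e.\ one with $\max_{\policy_\pl}\utility_\pl(\policy_\pl,\hat\policy_\opp) = \minmaxValue_\pl$; such a profile exists by the definition of $\minmaxValue_\pl$.

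First I would settle the \textbf{realisation problem}: constructing an on-path play whose normalised discounted payoff equals $v$ exactly. Since $v$ is feasible, write $v = \sum_{\action \in \actions}\alpha_\action \utility(\action)$ as a convex combination of the finitely many pure-profile payoffs. With exponential discounting the per-period weights are $w_t = (1-\simProb)\simProb^t$, and a key elementary fact is that once $\simProb \ge 1/2$ these weights satisfy $w_t \le \sum_{s>t} w_s$; a greedy assignment of the time indices $t=0,1,2,\dots$ to the profiles $\action$ then realises the prescribed total weight $\alpha_\action$ on each profile, so the resulting deterministic path $(\action^t)_t$ has discounted payoff exactly $v$. (If the repeated-game model is taken to include a public-randomisation device, this step collapses: simply play, each period, a correlated profile with expected payoff $v$, which also makes the continuation payoff equal to $v$ at every history and streamlines the incentive check below.)

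The equilibrium strategy $\policy$ is then: every player follows the path $(\action^t)_t$; if any single player $\pl$ ever deviates from it, all other players switch permanently to $\hat\policy_\opp$, minmaxing $\pl$ forever. I would verify that $\policy$ is a Nash equilibrium by a unilateral-deviation argument. Along the path no punishment is ever triggered, so each player receives exactly $\val_\pl$; since we only need the \emph{Nash} (not subgame-perfect) property, the punishments need not be credible off path, which is what lets us use minmax threats directly. For a deviation by $\pl$ at period $t$: holding the opponents at their path action $\action^t_\opp$, $\pl$ gains at most $(1-\simProb)\simProb^t\, M$ that period (where $M := \max_\action \utility_\pl(\action)$), and from period $t+1$ on the opponents minmax $\pl$, capping $\pl$'s continuation at $\minmaxValue_\pl$ by definition of $\hat\policy_\opp$. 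Comparing this to the on-path continuation value $c_t(\simProb)$ from period $t$, no deviation is profitable provided $c_t(\simProb) \ge (1-\simProb)M + \simProb\,\minmaxValue_\pl$ for every $t$.

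The hard part is exactly this last inequality, because it couples the realisation problem with the incentive check: I must ensure that every on-path continuation value $c_t(\simProb)$ stays close to $\val_\pl$, uniformly in $t$, not merely that the grand total equals $\val_\pl$. I would handle this by taking the path to be (eventually) periodic with period $L$ and profile frequencies $\alpha_\action$, so that for fixed $\simProb$ each continuation average deviates from $v$ by only $O\big(L(1-\simProb)\big)$; hence $\min_t c_t(\simProb) \to \val_\pl$ as $\simProb \to 1$, while the right-hand side $(1-\simProb)M + \simProb\,\minmaxValue_\pl \to \minmaxValue_\pl < \val_\pl$. Choosing $\simProb_0 \in [\tfrac12, 1)$ above the resulting threshold (and, if needed, reconciling the exactness of the realisation with periodicity via a vanishing one-off correction at the start of play) makes the inequality hold for all $\simProb \in [\simProb_0,1)$ and all $t$, completing the verification. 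With public randomisation this obstacle disappears entirely, since then $c_t(\simProb) = \val_\pl$ for all $t$.
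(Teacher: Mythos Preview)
The paper does not give its own proof of this proposition; it is stated as a classical result with a citation to Osborne (2004, Prop.\ 454.1) and is used only as a black box (to obtain Corollary~\ref{cor:folk-thm-rjs} via Theorem~\ref{thm:equivalence}). So there is nothing in the paper to compare your argument against.

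That said, your construction is the standard textbook one---grim trigger with permanent minmax punishment---and the outline is correct for a \emph{Nash} (as opposed to subgame-perfect) folk theorem. You also correctly identify the only genuinely delicate point: exact realisation of $v$ must be compatible with a uniform lower bound on the on-path continuation values $c_t(\simProb)$. Your ``periodic play plus a vanishing one-off initial correction'' sketch is somewhat informal; the clean way to close this is either (i) to assume a public randomisation device, as you note, in which case $c_t(\simProb)=\val_\pl$ identically, or (ii) to invoke the Sorin/Fudenberg--Maskin realisation lemma, which says that for $\simProb$ close enough to $1$ every feasible $v$ admits a deterministic realising sequence whose continuation payoffs all lie within any prescribed $\epsilon$ of $v$. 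Either route makes the final inequality $c_t(\simProb)\ge (1-\simProb)M+\simProb\,\minmaxValue_\pl$ immediate for $\simProb$ sufficiently large, and your argument then goes through.
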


\noindent
This can allow the players to achieve a much better outcome than what would be possible in the one-shot setting.
    For example, in an infinitely repeated Prisoner's Dilemma with sufficiently high $\simProb$, there is an equilibrium where the players cooperate in every round.
    However, the repeated setting also enables unfair equilibria where one player gets an outcome that is nearly as bad as mutual defection.
    Moreover, the mutual defection equilibrium is still present as well.

Note that the assumption of infinite repetition is important.
    Indeed, it is well known that for some games, repeating the game allows cooperative outcomes to be attained in equilibrium only if the number of repetitions is either infinite or uncertain.
    For example, in a $100$-times repeated Prisoner's dilemma, the only Nash equilibrium is to defect in every round.
    In other games, even finitely repeated play can enable more cooperative outcomes \citep{Benoit1985}.

\section{Recursive Joint Simulation Games and Equivalence to Repeated Games}\label{sec:recursive-joint-simulation}

We now establish our main thesis, that recursive joint simulation is equivalent to a repeated interaction.
We first focus on the simplest case, where the choice of whether to do each successive recursive simulation is made independently, with a constant probability.
In Appendix~\ref{sec:generalisations}, we show that this result is more robust and holds even beyond the simplest case.

    
A \defword{recursive joint-simulation game} $\recursive(\NFG, \simProb)$ is characterised by
    a normal-form game $\NFG = (\actions, \utility)$ 
    and by $\simProb \in [0, 1)$, which corresponds to the \defword{simulation probability}.
$\recursive(\NFG, \simProb)$ itself is depicted in \Cref{fig:efg-of-RJS}, and works as follows:
    \begin{itemize}
    \item[(i)] At each point, 
        either players must directly submit an action to be played in $\NFG$ (with probability $1-p$)
        or (with probability $p$)
            first a recursive simulation of the entire setup is done (including the possibility of running sub-simulations),
            the players see its result (including the results of all sub-simulations),
            and only then must submit an action to be played in $\NFG$.
    \item[(ii)] To the players, the simulated situations are indistinguishable from the real one.\footnote{
        Recall that without this assumption, the simulated games would become irrelevant. This can be illustrated using the example of non-recursive joint simulation in \Cref{fig:laptop-no-recursion}.
    }
    \item[(iii)] However, the only actions with (direct) real-world consequences are the ``top-level'' ones,
    outside of the simulations
        (while the simulated actions only serve as information on which the players condition their actions).
    \end{itemize}

\noindent
We now show that this setting is equivalent to infinitely repeated games with discounted utilities:

\begin{theorem}[Strategic equivalence to infinitely-repeated games]\label{thm:equivalence}
For any $\NFG$ and $\simProb$, $\recursive(\NFG, \simProb)$ is strategically equivalent to $\repeatedUnknown(\NFG, \simProb)$ and $\repeated_\omega(\NFG, \simProb)$.
\end{theorem}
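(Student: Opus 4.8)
The plan is to prove the two claimed equivalences via a short chain of reductions. Since strategic equivalence is defined as equality of expected utilities for every strategy profile, it is transitive, and the equivalence between $\repeatedUnknown(\NFG, \simProb)$ and $\repeated_\omega(\NFG, \simProb)$ is already recorded in \Cref{sec:background}; so it suffices to establish that $\recursive(\NFG, \simProb)$ and $\repeatedUnknown(\NFG, \simProb)$ are strategically equivalent. The first step is to pin down the canonical identification of strategies (the one promised by the footnote in \Cref{sec:background}). Because every agent in $\recursive(\NFG, \simProb)$ --- real or simulated --- is in the same epistemic position, namely having observed some finite sequence of action profiles from the simulations below it, a strategy for player $\pl$ is a single map $\policy_\pl$ from finite histories (sequences of elements of $\actions$) into $\Delta(\actions_\pl)$. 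This is exactly a behavioural strategy of the repeated game $\repeatedUnknown(\NFG, \simProb)$, so the identification is the identity map.

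Next I would unroll the recursion to describe the law of the realized sequence of action profiles. Reading the recursion from the bottom up: the deepest simulation (the one at which the ``simulate deeper?'' chance came up negative) plays $\policy(\emptyset)$; its outcome is observed one level up, where the agents apply $\policy$ to that length-one history; and so on, so that the level at depth $t$ from the bottom plays $\policy$ applied to the outcomes of the $t$ deeper levels. Because each level independently attempts a further simulation with probability $\simProb$, and $\simProb<1$ forces the depth to be almost surely finite (so all expectations are well defined, utilities being bounded), the number of levels $L$ is geometric with $P(L=\ell)=(1-\simProb)\simProb^{\ell-1}$. I would then observe that this generative process is verbatim the one generating the play of $\repeatedUnknown(\NFG, \simProb)$, under the identification of round $t$ with the depth-$t$-from-the-bottom level: the first round is played with empty history, each subsequent round by applying $\policy$ to the accumulated history, and after each round the horizon terminates with probability $1-\simProb$. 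Hence the realized sequences of action profiles are identically distributed in the two games.

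The crux, and the step I expect to be the main obstacle, is reconciling the two payoff conventions despite this shared law: in $\recursive(\NFG, \simProb)$ only the top-level (last, depth-$L$) action profile is payoff-relevant, whereas in $\repeatedUnknown(\NFG, \simProb)$ the payoff is $(1-\simProb)$ times the sum of $\utility$ over all rounds actually played. Writing $m_\ell$ for the expected stage utility $\E[\utility(X_\ell)]$ of the round-$\ell$ action profile $X_\ell$ in the never-terminating version of the process, the key technical observation is that conditioning on the chance event $\{L=\ell\}$, or on $\{L\ge\ell\}$, does not alter the law of the first $\ell$ action profiles, because the continuation coins are independent of the action-generating randomness. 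This reduces the $\recursive$ payoff to $\E[\utility(X_L)]=\sum_{\ell\ge 1}P(L=\ell)\,m_\ell$ and the $\repeatedUnknown$ payoff to $(1-\simProb)\sum_{\ell\ge 1}P(L\ge\ell)\,m_\ell$, and the geometric identity $P(L=\ell)=(1-\simProb)\,P(L\ge\ell)=(1-\simProb)\simProb^{\ell-1}$ makes these two expressions the same sum. Thus the expected utilities agree for every $\policy$, giving strategic equivalence; I would finally note that this is \emph{not} realisation equivalence --- mirroring the $\repeatedUnknown$ versus $\repeated_\omega$ remark in \Cref{sec:background} --- since the $\recursive$ payoff reads off a single action profile while the $\repeatedUnknown$ payoff aggregates a random number of them, and that the equivalence to $\repeated_\omega(\NFG,\simProb)$ then follows by transitivity.
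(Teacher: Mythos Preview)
Your proposal is correct and follows essentially the same approach as the paper: you establish the same canonical identification of strategies, the same bottom-up correspondence between simulation depth and round index, and the same geometric-weight calculation to reconcile ``last action only'' with ``normalized sum''. The only organizational difference is that the paper factors the argument through an explicitly named intermediate game $\repeatedLastOnlyUnknown(\NFG,\simProb)$ (Lemmas~\ref{lem:uncertain-uncertain-strategic-equivalence} and~\ref{lem:uncertain-recursive-realisation-equivalence}), whereas you carry out both steps in a single pass; your identity $P(L=\ell)=(1-\simProb)P(L\ge\ell)$ is exactly the content of Lemma~\ref{lem:uncertain-uncertain-strategic-equivalence}, and your ``unrolling'' of the recursion is exactly Lemma~\ref{lem:uncertain-recursive-realisation-equivalence}.
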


\begin{proof}
This immediately follows from the combination of \Cref{lem:uncertain-uncertain-strategic-equivalence} and \Cref{lem:uncertain-recursive-realisation-equivalence} below.
\end{proof}

To prove the theorem, we first consider the variant of the game $\repeatedUnknown(\NFG, \simProb)$ in which the players only receive reward for the last round of play
(and this reward directly counts towards the total utility, with no additional normalisation).
    Note that the players do not know whether the round they are currently playing is the last.
Denoting this game as $\repeatedLastOnlyUnknown(\NFG, \simProb)$, we have:

\begin{restatable}{lemma}{uncertainUncertain}\label{lem:uncertain-uncertain-strategic-equivalence}
For any $\NFG$ and $\simProb \in [0, 1)$, $\repeatedUnknown(\NFG, \simProb)$ is strategically equivalent to $\repeatedLastOnlyUnknown(\NFG, \simProb)$.
\end{restatable}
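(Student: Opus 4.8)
The plan is to fix an arbitrary strategy profile $\policy$ and show it yields each player the same expected payoff in both games. The two games have exactly the same tree of play and the same information structure: in each round the players observe the profile of actions played in all previous rounds but never learn whether the current round is the last, and after each round an exogenous chance node (independent of everything the players do) continues the game with probability $\simProb$ and stops it with probability $1-\simProb$. Strategies therefore correspond via the identity, and the \emph{only} difference between $\repeatedUnknown(\NFG,\simProb)$ and $\repeatedLastOnlyUnknown(\NFG,\simProb)$ is how the per-round payoffs are aggregated into a total. Hence it suffices to compute the expected total in each game and check that the two agree.

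Concretely, I would work on a single probability space carrying the random action profiles $(\action^t)_{t\ge 0}$ generated by $\policy$ for \emph{all} rounds (pretending the game never stops) together with an independently drawn stopping round $L$, where $L$ is the number of rounds actually played and $\E[\utility_\pl(\action^t)]$ denotes the expected round-$t$ payoff under $\policy$. Because continuation is exogenous, this coupling reproduces the law of play in both games and makes the action process independent of $L$. In $\repeatedUnknown$ player $\pl$ receives $(1-\simProb)\sum_{t=0}^{L-1}\utility_\pl(\action^t)$; using $\Pr[L>t]=\simProb^t$ (the probability of reaching round $t$) and independence, its expectation is $\sum_{t\ge 0}(1-\simProb)\,\simProb^t\,\E[\utility_\pl(\action^t)]$. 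In $\repeatedLastOnlyUnknown$ the player receives $\utility_\pl(\action^{L-1})$; using $\Pr[L=t+1]=\simProb^t(1-\simProb)$ (the probability that round $t$ is the last one) and independence, its expectation is $\sum_{t\ge 0}\simProb^t(1-\simProb)\,\E[\utility_\pl(\action^t)]$. These two sums are identical, which is the whole content of the lemma: the weight $\simProb^t(1-\simProb)$ a round carries by virtue of being the \emph{last} round is exactly the weight it carries in the summed-and-normalised game.

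The main obstacle is not the arithmetic but setting up the probability space so that the single identity $\Pr[\text{round } t \text{ is last}] = (1-\simProb)\,\Pr[\text{round } t \text{ is reached}]$ can be invoked rigorously. Three points need care: (i) stating the canonical strategy correspondence explicitly and checking that the induced law of $(\action^t)_{t \ge 0}$ agrees across the two games; (ii) justifying the factorisation $\E[\utility_\pl(\action^t)\,\mathbf{1}_{\{L>t\}}]=\E[\utility_\pl(\action^t)]\,\Pr[L>t]$ (and its analogue with $\mathbf{1}_{\{L=t+1\}}$), which rests on $L$ being independent of the action process because continuation is decided by separate chance nodes; and (iii) interchanging the infinite sum with the expectation, which is licensed by boundedness of the stage payoffs together with $\sum_{t\ge0}\simProb^t<\infty$. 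Once these are in place, equality of the two expected totals, and hence strategic equivalence, is immediate.
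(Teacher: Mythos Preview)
Your proposal is correct and follows essentially the same approach as the paper: fix a strategy profile, compute the expected payoff in each game, and verify that in both cases round $t$ contributes with weight $\simProb^t(1-\simProb)$ times the expected stage-game payoff of that round. The paper carries this out by writing the expectation as an explicit double sum over action sequences and rearranging; your coupling of the action process with an independent geometric stopping time $L$ packages the same rearrangement more compactly via the factorisations $\E[\utility_\pl(\action^t)\mathbf{1}_{\{L>t\}}]=\E[\utility_\pl(\action^t)]\Pr[L>t]$ and $\E[\utility_\pl(\action^t)\mathbf{1}_{\{L=t+1\}}]=\E[\utility_\pl(\action^t)]\Pr[L=t+1]$, but the underlying identity and the logical structure are identical.
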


\begin{proof}[Proof sketch]
This holds because
    in $\repeatedUnknown(\NFG, \simProb)$, the probability of the game lasting at least $k$ rounds is $\simProb^k$,
        so
            (because of the normalisation)
        the rewards from the $k$-th round have weight $\simProb^k (1-\simProb)$.
Similarly,
    in $\repeatedLastOnlyUnknown(\NFG, \simProb)$, the weight of rewards from the $k$-th round is equal to the probability of the game lasting precisely $k$ rounds,
        which is -- likewise -- equal to $\simProb^k (1-\simProb)$.
\end{proof}

\begin{restatable}{lemma}{uncertainRecursive}\label{lem:uncertain-recursive-realisation-equivalence}
For any $\NFG$ and $\simProb$, $\repeatedLastOnlyUnknown(\NFG, \simProb)$ is realisation-equivalent to $\recursive(\NFG, \simProb)$.
\end{restatable}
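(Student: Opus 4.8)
The plan is to exhibit the canonical identification of strategy profiles in the two games and then show that, under this identification, every profile $\policy$ induces the same distribution over the number of stages played and, conditional on that number, the same joint distribution over the realized sequence of action profiles; since the payoff in both games is read off from one designated stage, this yields realisation-equivalence. First I would make the strategy identification precise. In both $\recursive(\NFG, \simProb)$ and $\repeatedLastOnlyUnknown(\NFG, \simProb)$ a player's information state is a finite sequence of action profiles in $\actions$: in the recursive game this is the string \texttt{history\_below} of outcomes of the simulations nested below the current one, and in the repeated game it is the list of action profiles observed in previous rounds. In both cases a behavioural strategy profile $\policy$ is a map from such sequences to $\Delta(\actions)$ under which the two players act simultaneously (conditioning on the same observed history, neither seeing the other's current action), so the two strategy sets are literally the same object --- the canonical mapping required by the definition of realisation-equivalence.

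Next I would match the distribution over the number of stages. In $\repeatedLastOnlyUnknown(\NFG, \simProb)$, round $0$ is always played and each subsequent round occurs only if the independent continuation coin (probability $\simProb$) comes up in all previous rounds, so the probability that the payoff-bearing last round is round $k$ is $\simProb^k(1-\simProb)$. Unrolling the recursion of \texttt{RJS-hist} shows that the nesting depth is governed by the same geometric process: with probability $1-\simProb$ the call refuses immediately (\texttt{history\_below = []}) and otherwise recurses, so the returned history has length exactly $k+1$ with probability $\simProb^k(1-\simProb)$. Crucially, in both games these coins are exogenous and independent of the actions, so the number of stages is independent of the realized play.

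Then, conditioning on a fixed number $k+1$ of stages, I would show that the joint law of the action profiles coincides, by induction on the stage index from the bottom up. Reading \texttt{RJS-hist} from the base case outward, the deepest simulation acts on the empty history and draws $a^{(0)} \sim \policy(\emptyset)$; the level one step up observes $(a^{(0)})$ and draws $a^{(1)} \sim \policy(a^{(0)})$; and in general the level at depth $t$ from the bottom draws $a^{(t)} \sim \policy(a^{(0)}, \dots, a^{(t-1)})$. This is exactly the generative process of $\repeatedLastOnlyUnknown(\NFG, \simProb)$, where round $t$ draws its action profile from $\policy$ applied to the history of rounds $0, \dots, t-1$. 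Identifying ``depth $t$ from the bottom'' with ``round $t$'' therefore makes the conditional distributions over $(a^{(0)}, \dots, a^{(k)})$ identical, and since \texttt{RJS-payoffs} evaluates the utility at \texttt{hist[-1]} $= a^{(k)}$ (the outermost, i.e.\ real, agents) while the last-only game pays out the last round, the payoff is assigned to corresponding stages; combining with the matching stage-count distribution gives equality of the induced distributions over utility profiles.

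The step I expect to be the main obstacle is reconciling the opposite temporal orders in which the two games resolve their randomness: the repeated game plays round $0$ first and interleaves a stopping coin after each round, whereas the recursive device fixes the entire nesting depth through its chain of refusal coins before any of the (bottom-up) action draws occur. I would resolve this by stressing that realisation-equivalence concerns only the induced distribution over utility profiles, not the order of resolution; because in both games the depth-determining coins are independent of the action draws, the outcome distribution factors as (law of the stage count) $\times$ (law of the action sequence given the count), and both factors have been matched. A secondary point to handle carefully is the orientation of the history --- one must verify that the outermost call of \texttt{RJS-hist} corresponds to the \emph{final} round rather than the first, which is exactly why the last-only reward convention of $\repeatedLastOnlyUnknown$, rather than a discounted sum, is the correct game to compare against.
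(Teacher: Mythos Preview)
Your proposal is correct and follows essentially the same approach as the paper: both identify strategies via finite action-profile histories, match the deepest simulation with the first round and the outermost with the last, and verify that each terminal history $(\action^0,\dots,\action^t)$ has reach probability $\simProb^t(1-\simProb)\prod_s \policy(\action^s \mid \action^0,\dots,\action^{s-1})$ in both games and yields payoff $\utility^\NFG(\action^t)$. The only cosmetic difference is that you factor the law as (stage count) $\times$ (actions given count) while the paper computes the two reach probabilities directly in one shot; your added discussion of the temporal-ordering subtlety is a nice clarification but not a different argument.
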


\begin{proof}[Proof sketch]
The key insight is that
    a simulation round in $\recursive(\NFG, \simProb)$
    will only output player actions for that round after all ``deeper-level'' recursive simulations have finished.
As a result,
    the deepest simulation in $\recursive(\NFG, \simProb)$ corresponds to the \textit{first} round in $\repeatedLastOnlyUnknown(\NFG, \simProb)$
    while the ``top-level'' (i.e., real) actions in $\recursive(\NFG, \simProb)$ correspond to the {\em last} round's actions in $\repeatedLastOnlyUnknown(\NFG, \simProb)$.
Given this observation, showing the equivalence between the two games is straightforward.
\end{proof}

Another way to understand \Cref{lem:uncertain-recursive-realisation-equivalence} is that \Cref{fig:efg-of-RJS} is in fact also a correct EFG representation of $\repeatedLastOnlyUnknown(\NFG, \simProb)$, where we execute all the coin tosses for whether we have additional rounds before the players take any actions.
Under this interpretation, the top row of player actions in \Cref{fig:efg-of-RJS} is simply the first round of the repeated game, the middle one the next round, etc.

\medskip

Finally, since strategically equivalent games have the same sets of Nash equilibria, combining \Cref{thm:equivalence} with \Cref{prop:folk-thm-classic} immediately gives a folk theorem for recursive joint simulation games:

\begin{corollary}[Folk theorem for recursive joint simulation]\label{cor:folk-thm-rjs}
Let $\NFG$ be any NFG.
Let $v$ be a feasible payoff vector s.t.~for all players $\pl$ we have that $\val_\pl > \minmaxValue_\pl$.
Then
there exists $\simProb_0 \in [0, 1)$
s.t.~for every $\simProb \in [\simProb_0, 1)$,
    there is some $\policy \in \NE(\recursive(\game, \simProb))$
    s.t.~$\forall \pl : \utility_\pl(\policy) =  \val_\pl$.
\end{corollary}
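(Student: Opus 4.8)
The plan is to obtain the corollary as an immediate transfer of the classical folk theorem (\Cref{prop:folk-thm-classic}) across the strategic equivalence established in \Cref{thm:equivalence}, exploiting the fact that strategic equivalence carries both Nash equilibria and their payoff vectors from one game to the other. Since $v$ is a feasible payoff vector with $\val_\pl > \minmaxValue_\pl$ for every player $\pl$, the hypotheses of \Cref{prop:folk-thm-classic} applied to $\NFG$ and $v$ are met verbatim. Thus I can read off a threshold $\simProb_0 \in [0,1)$ and, for each $\simProb \in [\simProb_0, 1)$, an equilibrium $\policy^\star \in \NE(\repeated_\omega(\NFG, \simProb))$ attaining $\utility_\pl(\policy^\star) = \val_\pl$ for all $\pl$. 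The same $\simProb_0$ will serve for the corollary.

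Next I would fix $\simProb \in [\simProb_0, 1)$ and invoke the $\recursive(\NFG,\simProb) \sim \repeated_\omega(\NFG,\simProb)$ part of \Cref{thm:equivalence}. By definition, strategic equivalence supplies a canonical mapping $\Phi$ between the strategy profiles of the two games under which every profile yields identical expected utilities in both. Setting $\policy := \Phi(\policy^\star)$, payoff preservation immediately gives $\utility_\pl(\policy) = \utility_\pl(\policy^\star) = \val_\pl$ for each $\pl$, so it only remains to verify that $\policy \in \NE(\recursive(\NFG, \simProb))$.

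The one point that needs care — and hence the main obstacle — is upgrading "$\Phi$ preserves the payoffs of a single profile" to "$\Phi$ preserves the equilibrium property". For this I would check that the canonical $\Phi$ acts componentwise on players, i.e.\ it restricts to a bijection $\Phi_\pl$ on player $\pl$'s strategies with $\Phi(\policy_\pl, \policy_\opp) = (\Phi_\pl(\policy_\pl), \Phi_\opp(\policy_\opp))$. This holds for the mapping produced by composing the equivalences in \Cref{lem:uncertain-uncertain-strategic-equivalence} and \Cref{lem:uncertain-recursive-realisation-equivalence} (together with the standard uncertain-length/discounted correspondence), since each of these is a relabelling of rounds and simulation levels that never mixes the players. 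Granting componentwise action, any unilateral deviation by player $\pl$ away from $\policy$ in $\recursive(\NFG,\simProb)$ is the $\Phi$-image of a unilateral deviation away from $\policy^\star$ in $\repeated_\omega(\NFG,\simProb)$ achieving the same payoff; as $\policy^\star$ admits no profitable deviation, neither does $\policy$. This yields $\policy \in \NE(\recursive(\NFG, \simProb))$ with $\utility_\pl(\policy) = \val_\pl$ for all $\pl$, and since $\simProb \in [\simProb_0,1)$ was arbitrary, the corollary follows.
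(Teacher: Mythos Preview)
Your proposal is correct and follows exactly the paper's approach: combine \Cref{thm:equivalence} with \Cref{prop:folk-thm-classic}, using that strategically equivalent games share their Nash equilibria. The paper's treatment simply asserts this last fact in one line, whereas you spell out the componentwise nature of the correspondence to justify it; this extra care is fine and addresses a detail the paper leaves implicit.
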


This result shows that in a game with recursive joint simulation, any feasible and individually rational outcome can be stabilised as an equilibrium.
It should be noted that such outcomes can be worse as well as better than the equilibrium outcomes of the baseline game $\game$. 
In practice, however, we expect that the players will by default interact in the baseline game $\game$ that is {\em not} enhanced by RJS, and playing the RJS game instead will require a deliberate decision by all of them.
The key reason for this is that participating in $\recursive(\game, \simProb)$ requires delegating one's actions to an agent that can be simulated using the joint simulation device, and this delegation seems unlikely to happen without the player's consent.
This suggests that if the default outcome is some Nash equilibrium $\policy_0$ of $\game$, the RJS game will only be played if the equilibrium $\policy$ adopted there leaves all players better off than $\policy_0$.

\section{The Inside Perspective: Why the Equivalence Also Holds Under Self-locating Beliefs}\label{sec:self-locating-beliefs}

Our main thesis is that recursive joint simulation games are, in some sense, equivalent to repeated games.
However, the arguments we gave in \Cref{sec:recursive-joint-simulation} only establish a particular form of strategic equivalence.
As additional evidence for the more general thesis, this section describes how the \textit{internal} experience of an agent in one setting is analogous to the experience of an agent in the other setting.

Consider the internal perspective of a -- possibly simulated -- agent who is called upon to act.\footnote{In the terminology of the literature on self-locating beliefs, we have so far considered the game from the \textit{planning} stage or \textit{ex ante}, and in this section we would like to consider the game from the \textit{action} stage or \textit{de se} \citep{Aumann1997}.}
You wake up, ready to play the game. Together with the other players, you see a nested simulation of play on the laptop on the table, going $\simsBelow$ rounds deep. But you realise that you yourself may be in a simulation as well, say $\simsAbove$ rounds deep from the perspective of the real world. What should now be your subjective probability distribution over $\simsAbove$?
Do these correspond to beliefs in the case of repeated games?


The difficulty with this question is that there exist multiple ways of assigning self-locating beliefs (such as generalised thirding and generalised single- and double halfing\footnotemark{}), which might make it unclear what ``probability'' refers to.
And while this turns out to not be a problem in our setting, it does deserve a more careful discussion.
\footnotetext{
    In prior work three general methods of forming self-locating beliefs have been proposed:
        generalised thirding (GT) (introduced as ``consistency'' by \citet{piccione1997interpretation}; a.k.a.\ the self-indication assumption \citep{Bostrom2010}),
        generalised double halfing (GDH) (introduced as ``Z-consistency'' by \citet{piccione1997interpretation}; a.k.a.\ the minimum-reference-class self-sampling assumption \citep{Bostrom2010}),
        and generalised single halfing (GSH) (a.k.a.\ the non-minimum-reference-class self-sampling assumption \citep{Bostrom2010}).
        (These names reflect the beliefs that they correspond to in the Sleeping Beauty problem.)
}

The central issue is that recursive joint simulation involves \textit{imperfect recall}, such as being unable to recall your earlier selves, higher up in the simulation hierarchy.
Even worse, it involves \textit{absentmindedness}, a phenomenon where a single player faces the same information set multiple times in a single path of play.

\begin{definition}[Absentmindedness]
An information set $I$ is {\em absentminded} if it contains multiple nodes on the same trajectory.
\end{definition}

\noindent
Famous examples of absentmindedness include the Sleeping Beauty problem~\citep{Elga2000} and the absentminded driver problem~\citep{piccione1996absent}.
And in such games, questions of how to form beliefs from within the scenario become contentious.
However, in recursive joint simulation, absentmindedness occurs primarily when you have not yet seen how many sub-simulations are ``below'' you.
    Once the players get to the point where they are asked to choose whether to cooperate or defect, they know how many sub-simulations there were. At this point there can be no absentmindedness, because for any $K$, there will be at most one point in the game where the players are acting based on seeing $K$ sub-simulations. So absentmindedness does not occur at any actual decision point.\footnote{In contrast, if we imagine a setting where each player has to consent before running a sub-simulation, the ``consent or not'' decisions would be made under absentmindedness.}

Crucially, as long as we restrict decision-making to non-absentminded information sets, there is a single natural way to assign probabilities, and most\footnotemark{} of the methods for assigning self-locating beliefs are consistent with it (``regular'').
    \footnotetext{
        In particular, generalised thirding and generalised double halfing are both regular.
        In contrast, we include generalised single halfing as an example of a method which is not regular; it will generally assign a higher probability to being in possible worlds where one takes fewer actions.
        Indeed, as is pointed out for example by~\citet{Oesterheld22DeSeVExAnte}, the same is true of it even in games of perfect recall; and in particular, the same would remain true in traditional repeated games that end with some probability each round. In particular, consider a repeated game with perfect recall that lasts either one or two rounds, as decided by a {\em fair} coin toss. Then, upon seeing it is the first round, GSH assigns a probability of $1/3$ to the event that the coin will land Tails (i.e., that there will be a second round). To the extent that many would consider this unnatural, this illustrates the way in which regularity is a natural property.
        Other critiques of GSH are given by \citet{Draper2008} and \citet{Oesterheld22DeSeVExAnte}.
    }

\begin{definition}[Regular methods of assigning self-locating beliefs]\label{def:regular-belief}
A method of self-locating belief is {\em regular}
if, in a game that involves no absentmindedness, given strategies $\sigma$ and conditional on being in a non-absent-minded information set $I$, the probability of being in a full path of play $w$ is $0$ if the path down the tree corresponding to $w$ does not intersect with $I$, and $\Pr(w \mid I, \sigma)$ otherwise.
\end{definition}

Finally, we get the following result:

\begin{restatable}[Equivalence of internal views]{proposition}{selfLocBeliefsone}\label{thm:self-loc-beliefs-1}
Suppose that
    we use a regular method of assigning self-locating beliefs
    and at each point a simulation is run with probability $\simProb$.
Then, upon awakening (the ``current'' round) and observing that from this point on, the simulation went $m$ rounds deep (not counting the current round), 
the probability that the current round is itself $n$ rounds deep from the initial (``reality'') round is
    $\Pr(\simsAbove = n+m+1 \mid \simsAbove \geq m+1) = \simProb^{n}(1-\simProb)$,
where $\simsAbove$ denotes the total number of rounds.
\end{restatable}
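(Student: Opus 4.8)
The plan is to compute the belief as a normalised ex-ante reach probability over the nodes the agent cannot distinguish, with the perfect-recall property flagged above doing the work that makes this legitimate. First I would fix the observation: upon awakening the agent sees that exactly $m$ sub-simulations took place beneath it, together with a specific profile of actions played in those $m$ rounds; let $\infoset$ be the resulting information set. Its nodes are indexed by the (unobservable) depth $n \geq 0$ of the current round below reality. I would then write the ex-ante reach probability of the depth-$n$ node as a product of an ``above'' and a ``below'' factor. Reaching depth $n$ at all requires the chance mechanism to choose to simulate $n$ times in succession starting from reality, which has probability $\simProb^n$. Conditioned on having reached depth $n$, the probability that the subtree then produces exactly $m$ further rounds terminating in the observed action profile is some quantity $q$ that does not depend on $n$: the recursion is self-similar, so a sub-simulation is generated by the same process regardless of how deep its parent sits, and the players' strategies cannot condition on $n$ because the depth is unobservable. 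Hence the depth-$n$ node is reached with probability $\simProb^n q$.

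Next I would verify that treating $\infoset$ as an ordinary conditioning event is valid, which is the crux. In a single playthrough the number of rounds visible beneath a round strictly decreases as one moves downward, so $\infoset$ --- which pins that number to $m$ --- contains at most one node per playthrough. This is exactly the sense in which $\recursive(\NFG, \simProb)$ has perfect recall, and it removes any genuine self-locating multiplicity: the reach probability of $\infoset$ is simply the sum of the reach probabilities of its nodes, and the within-infoset belief is obtained by normalising. Applying Bayes' rule and cancelling the common factor $q$ gives
\[
    \Pr(\text{depth} = n \mid \infoset)
        = \frac{\simProb^n q}{\sum_{n' \geq 0} \simProb^{n'} q}
        = \frac{\simProb^n}{\sum_{n' \geq 0} \simProb^{n'}}
        = \simProb^n (1 - \simProb).
\]

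Finally, I would reconcile this with the form in the statement. Writing $\simsAbove$ for the total number of rounds in a playthrough, the descent stops after a geometric number of simulate-decisions, so $\Pr(\simsAbove = k) = \simProb^{k-1}(1 - \simProb)$. A current round at depth $n$ with $m$ rounds below it corresponds to $\simsAbove = n + m + 1$, while the event defining $\infoset$ (at least $m$ rounds available below the current round) corresponds to $\simsAbove \geq m + 1$, whose probability is $\simProb^m$. Thus $\simProb^n(1-\simProb) = \Pr(\simsAbove = n+m+1 \mid \simsAbove \geq m+1)$, as claimed.

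The step I expect to be the genuine obstacle is the middle one: carefully justifying that the self-locating belief reduces to ordinary Bayesian conditioning. Two ingredients must be pinned down precisely --- (i) the independence of the ``below'' factor $q$ from the depth $n$, which rests on the self-similarity of the construction and on depth being unobservable to the strategies, and (ii) the at-most-once-per-playthrough property that licenses treating $\infoset$ as an ordinary event. Everything after that is the geometric-series bookkeeping recorded above.
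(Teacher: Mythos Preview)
Your proposal is correct and follows essentially the same approach as the paper's own proof: identify the information set, argue that it meets each playthrough at most once (the paper phrases this as ``no absentmindedness''; you phrase it as the number of visible sub-rounds being strictly monotone in depth), then perform ordinary Bayesian conditioning and cancel the depth-independent factor to obtain the geometric expression. The paper carries out the computation by summing over full possible worlds (action profiles of length $n+m+1$ extending the observed $\vec a(m)$) whereas you factor the node reach probability directly as $\simProb^n q$; these are the same calculation after marginalising over the yet-unplayed actions above the current round, so the difference is purely presentational.
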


The significance of \Cref{thm:self-loc-beliefs-1} is that the expression for the belief of currently being $n$ levels deep in the simulation is the same as the probability of $\repeatedUnknown(\NFG, \simProb)$ going on for an additional $n$ rounds.
So, one discounts what would happen in these cases similarly, and hence
the equivalence to such repeated games holds from the internal perspective as well.



In the appendix, we derive this as a corollary of an even stronger result (\Cref{thm:self-loc-beliefs-2}), which additionally allows the probability of running the next simulation to depend on the current simulation depth.

\section{Two Objections to the Simulation Framework}\label{sec:objections}

A central assumption of our setup is 
    that the agents cannot distinguish between being in a simulation and being in the real world.
One might object that this assumption 
    is impossible to satisfy.
We split this high-level concern into two distinct objections.
The first is that the assumption is impossible to satisfy {\em in principle}. For example, one may hold that consciousness cannot be simulated, and an agent can tell whether it is conscious or not.
    The second is that the assumption is difficult or impossible to satisfy {\em in practice}.
    For example, it may be that in practice, a sufficiently capable agent could always perform a randomly selected experiment that reveals that it is in a simulation.
    If so, there may well be theoretically deep reasons for these difficulties, but we classify them under the ``in practice'' objection nevertheless.
We argue that the first objection does not apply to the setting of AI agents discussed in this paper; we take the second objection seriously, but we argue that at least under certain conditions, we should expect indistinguishability to be possible in practice.

\subsection{Could indistinguishability be impossible {\em in principle}?}

Might {\em this} (i.e., my current existence) be a simulation?
Such sceptical hypotheses have been discussed across cultures since ancient times.
One response is that while this is possible in principle, there is little reason to assign it much credence.
\cite{Bostrom2003} argues otherwise; but in any case, this response is of little relevance to the context of our paper
--- this is because our setting explicitly runs simulations of some agents, so that similar agents clearly have every reason to take the possibility of being in a simulation seriously.

A distinct response, though, is that fundamentally, \textit{this} cannot be a simulation --- for example, because simulated consciousness is not real consciousness and my current experience is that of real consciousness.\footnote{
    Specifically, the question of whether consciousness can be simulated is closely related to the question of whether functionalism \citep{sep-functionalism} or computationalism \citep{sep-computational-mind} about consciousness are true.
    Various arguments have been made against these views, including Searle's (\citeyear{Searle1980}) Chinese Room argument \citep{sep-chinese-room}, arguments from p-zombies \citep{sep-zombies}, and triviality arguments (see, for example, \citealt[][Sect.\ IV]{searle1990brain}; \citealt[][]{Chalmers1996}; \citealt{copeland1996computation}).
}
This response is typically based on an argument about our human experience, and we have nothing new to say about that here.
But our paper, instead, is about {\em AI} agents.
For AI agents, it is hard to see how this response could be accurate.
We understand the fundamental physical principles involved in how an AI agent is instantiated on a computer, and there seems to be no important difference in this regard if the agent is instead instantiated inside a simulation on that same computer.
There {\em is} a difference between whether it is interacting with the real world\footnote{
        Of course, in the context of our paper, even the agents at the top level, i.e., those not being simulated by other agents, may not be acting directly in the physical world.
        Instead, they may be, for example, agents trading in the financial markets.
        The ``real world'' here should be taken to be whatever domain the agents actually care about.
    } 
as the environment, or with a simulated environment;
    but, given that we understand the agent's perception of that environment as coming through a stream of bits and only that stream of bits, we can, in principle, ensure that this stream of bits appears identical, whether it comes from a simulated environment or the real world.

If there is a problem with this argument, we do not see it.
    Still, even if one believed both that conscious, human-level AI is possible and that such AI could always tell when it is and is not in a simulation,
        this would only imply that indistinguishable simulations are impossible \textit{for this type of advanced AI}.
    In the meantime,
        we have long had available many less advanced agents that act in simple domains based on classical decision-theoretic principles.
    Since these agents routinely get tested and simulated without being able to tell the difference from deployment,
        they would be justified in having self-locating uncertainty.
    And while many agents do not reason about this uncertainty, we could easily get ones that do --
        for example, by hard-coding such reasoning into a rule-based agent.

    This consideration implies that AI agents are different from humans in this regard.
        Unlike with humans, we know that making simulations indistinguishable from reality is possible at least for \textit{some} AI agents.  Moreover, an AI agent could be fully aware that such simulations are taking place and that it may be in one of them; such an agent can in principle have a relatively complete picture of the world and AI agents' position in it in general, and yet be uncertain whether {\em it itself} is one of the simulated agents, reflecting self-locating uncertainty. This is very much unlike the question whether we (humans) might ourselves be in a simulation; if we somehow learned that we were, it would drastically change our view of the world (not just of our own location in it), and, if all we learned was the mere fact {\em that} we were in a simulation, we would still have many questions about how exactly that came about, who is doing the simulation and what does their world look like, etc.\footnote{Note that this would remain true even if we still considered our simulated environment ``real'' in an important sense~\citep{Chalmers22:Realityplus}.}  
        
        So, even if one is sceptical about indistinguishable simulations in general,
            the right question to ask is not \textit{whether} such simulations are possible, but for \textit{which} agents are they possible.
        Additionally, if our focus is on a particular task, we might want to ask: ``is there an agent that is advanced enough to accomplish this task, yet also simple enough\footnotemark{} to be put inside of an indistinguishable simulation (of the task)''?
        \footnotetext{
            When it comes to assessing the difficulty of making a simulation indistinguishable to an agent,
            there likely exist better proxies than the apparent simplicity or complexity of that agent.
            For example, we expect that it matters how much interaction the agent has with its environment,
                as well as the degree to which it is actively trying to resolve its self-locating uncertainty.
            Finding such better proxies is not our focus in this text, but we think it would make for a valuable project.
        }

\subsection{Could indistinguishability be impossible {\em in practice}?}\label{sec:sub:objections-practical}

When it comes to putting the approach outlined in this paper into practice,
    the first thing to note is that there are scenarios where the approach works without any issues.
As an illustration of a particularly clear-cut case, consider a scenario where
    Alice and Bob are each considering how much money to donate to a given non-profit organisation,
    and they each care about how much the other gives. 
    For example, each may be willing to give a significant amount if the other does so as well, but each is worried about the other free-riding on the contribution instead.
    This situation can be tackled using various methods\footnote{For specialised approaches to precisely this problem, see~\cite{Conitzer10:Expressive,Ghosh08:Charity}.} --- but one of them is recursive joint simulation.
    For example, Alice and Bob could both authorise a trusted third party, Cecile, to donate up to \$100k on their behalf.
    To determine how much money to donate, Cecile would ask both Alice and Bob to submit a computer program that takes a sequence of numbers (representing the outcomes from nested simulations) as input, and outputs a number between 0 and 100\,000.
    She would then
        implement the recursive joint simulation from \Cref{fig:pseudocode},
        apply it to the programs provided by Alice and Bob,
        and donate the amounts corresponding to the output of the top-level programs.
In this particular scenario, assuming that there are no flaws in the implementation (such as security vulnerabilities in Cecile's computer that allow the programs to gain access to additional information),
we are confident that
        no program will be able to distinguish between the ``real'' instance whose output determines the actual donation size and the other ``simulated'' instances.

The reason we are confident about the reliability of the simulation approach in the example above is that the environment is simple and there is no exchange of information between the AI agents and the real world (apart from Cecile reading the output of the top-level agents).
However, we find it plausible that the difficulty of creating simulations indistinguishable from reality sharply increases with the amount of interaction that is necessary between the real world and the agent.
As an illustration, suppose that
    instead of Cecile being authorised to make the donation on behalf of Alice and Bob,
    the AI agents were required to send the donations from their owners' online banking accounts, and therefore would have direct access to those accounts along with all the information that can be obtained from them.
This would make Cecile's job significantly more difficult, as she would now need to implement indistinguishable replicas of the online banking systems.

We believe that practical limitations to the ability to simulate the real world or sufficiently rich environments
are worth taking seriously.
However, a detailed discussion of this topic is beyond the scope of the present paper.
We instead wish to highlight it as a promising area for future investigation.

\section{Conclusion}\label{sec:conclusion}

In this paper,
    we considered how  game-theoretic settings change
        when the players are AI agents with access to joint simulation of their current situation, including the possibility to run simulations recursively.
    We have shown that this setting is equivalent to discounted infinitely-repeated games,
        and this is true even for games with more than two players.
        We have also shown that this equivalence remains valid if we change some of the modelling assumptions,
        as well as when we replace the ``strategic outside view'' perspective by the internal perspective of an agent reasoning from within the game.

As a result of this equivalence, recursive joint-simulation games inherit the properties of repeated games.
    In particular, when the simulation budget is unlimited or uncertain, we inherit the traditional folk theorem which allows for a wide range of outcomes.
    In theory, this is an ``anything goes'' result which implies that recursive joint simulation might make things worse -- since the folk theorem (\Cref{prop:folk-thm-classic}) might also enable outcomes that are worse than the equilibria of the original game.
    In practice, we expect that the introduction of recursive joint simulation would typically make things strictly better for all players.
        This is because we believe that typically the default would be \textit{not} to use recursive joint simulation, and using it would require deliberate action by each player -- delegating their action to an AI agent and making that agent compatible with the recursive joint simulation device.
        Since these actions would likely have to be made voluntarily, we predict that  recursive joint simulation would only be used if each player expects to gain over the status quo of playing the original game.

The future work directions we find particularly promising are the following:
    First, under what conditions can recursive joint simulation be realised in practice?
Besides a trusted third-party architecture being made available for this, cryptographic tools may also enable the players to jointly run the simulation, in such a way that if one player deviates from its share of the computation at any point, then nobody learns anything.
Second, as a more incremental line of work, one might explore
    several variants of the recursive simulation setting.
    These include the settings studied in Appendix~\ref{sec:generalisations},
    and minor extensions such as scenarios
        where the players are uncertain about the simulation probability
        or where each simulation incurs a cost to the players.
Third, there have been several types of simulation (and other forms of reasoning about other agents) discussed in the literature (cf.~\Cref{sec:intro}), using various types of formalisms.
    We hope that by exploring these threads further, we might eventually arrive at frameworks that allow us to think about these topics in a more unified manner, giving us the opportunity to put together the different insights and build on top of them.
Finally, the objections discussed in \Cref{sec:sub:objections-practical} highlight the importance of
    designing agents (and the interaction protocols) in a way that allows the agents to
    verifiably demonstrate that they will be unable to notice any differences between simulation and reality.

More broadly, the setting of our paper is just one example illustrating the interplay between:
    (1) topics in formal epistemology, including the study of decision scenarios that lie far outside of the context of ordinary human affairs; and
    (2) questions about the design of AI agents, whose strategic interactions may take a form unlike any in ordinary human affairs.
(1) turns out surprisingly relevant to (2), and (2) provides further concreteness as well as new problems for (1). As we discussed at the outset of this paper,
scenarios involving AI agents can resemble scenarios at the foundations of decision and game theory, and the former, by being more concrete in their details, can shed light on the latter -- e.g., simulating an AI agent to predict its actions in the real world vs.~Newcomb's Demon being able to predict an agent's actions in an unspecified way.
We expect this to continue to be a fruitful and beneficial interaction.

\backmatter



\section*{Declarations}

\textbf{Data availability statement:} not applicable.

\textbf{Competing interests:} The authors have no relevant financial or non-financial interests to disclose.

\textbf{Funding:} While finalising this research in 2026, Vojtech Kovarik's work was supported by the Czech Science Foundation (GACR) grant no. 26-23955S.
Caspar Oesterheld was supported by a Future of Life Institute PhD Fellowship.
The remainder of this research was funded by the Cooperative AI Foundation and Macroscopic Ventures (formerly Polaris Ventures and the Center for Emerging Risk Research).

\textbf{Author contributions:} All authors contributed to all parts of work on this research; the order of authors reflects the amount of contribution.
The last author additionally provided the high-level vision behind the project.

\begin{appendices}

\section{Extensive-Form Games}\label{sec:app:efgs}
\begin{definition}[EFG]\label{def:EFG}
An \defword{extensive form game} is a tuple of the form $\EFG = \left< \playerSet, \actions, \histories, p, \pi_c, \mc I, u \right>$ for which
\begin{itemize}
 \item $\playerSet = \{1,\dots,N\}$ for some $N \in \mathbb{N}$,
 \item $\histories$ is a tree\footnote{Recall that a ``tree on $X$'' refers to a subset of $X^*$ that is closed under initial segments.} on $\actions$,
 \item $\actions$ and all sets $\actions(h) := \{ a\in \actions \mid ha \in \histories \}$, for $h\in \histories$, are compact,
 \item $p : \histories \setminus \leaves \to \playerSet \cup \{c\}$ (where $\leaves$ denotes the leaves of $\histories$),
 \item $\pi_c(h) \in \Delta (\actions(h))$ for $p(h)=c$,
 \item $u : \leaves \to \R^{\playerSet}$, and
 \item $\mc I = (\mc I_1,\dots, \mc I_N)$ is a collection of partitions of $\histories$, where each $\mc I_i$
    provides enough information to identify $i$'s legal actions.\footnote{
        That is, for every $I\in \mc I_i$, $p(h)$ is either equal to $i$ for all $h\in I$ or for no $h\in I$. If for all, then $\actions(h)$ doesn't depend on the choice of $h\in I$.
    }
\end{itemize}
\end{definition}

\noindent
Recall that every normal-form game can be represented as an EFG
    (by assuming that players select actions one by one, but only observe the actions of others after taking their own action).

A \defword{behavioural strategy} for player $\pl$ is a mapping
$
    \policy_\pl
    :
    \left\{ \history \in \histories \mid \playerFunction(\history) = \pl \right\}
    \mapsto
    \policy_\pl(\history) \in \Delta(\actions(\history))
$.
By $\policies = \prod_{\pl \in \playerSet} \policies_\pl$, we denote the set of all behavioural strategy profiles $\policy = (\policy_\pl)_{\pl \in \playerSet}$.
Each $\policy \in \policies$ induces a probability distribution over the set $\leaves$ of leaves the EFG.
This allows us to define the \defword{expected utility} in $\EFG$ as
    $\utility_\pl(\policy)
        := \E_{\leaf \sim \policy} \utility_\pl(\leaf)
    $.
A behavioural strategy $\policy_\pl$ is said to be a \defword{best response} to $\policy_\opp$ if
    $\policy_\pl \in \arg \max_{\policy'_\pl \in \policies_\pl} \utility_\pl(\policy'_\pl, \policy_\opp)$.
A \defword{Nash equilibrium} (NE)
    is a strategy profile $\policy$ under which each player's strategy $\policy_\pl$ is a best response to $\policy_\opp$.
    We use $\NE(\game)$ to denote the set of all Nash equilibria of $\game$.

All of the definitions straightforwardly generalise to the case where rewards are received during the game
    (i.e., we have some reward function $\reward_\pl$ that assigns $\reward_\pl(\history, \action) \in \R$ to every $\action \in \actions(\history)$ and $\utility_\pl(\leaf) := \sum_{\history \action \sqsubset \leaf} \reward_\pl(\history, \action)$).
Moreover, they also generalise to the case of infinite games.
    While this might sometimes cause the expectations to be undefined or infinite, we will only work with games where this is not an issue.

\section{Variable Simulation Probability}\label{sec:generalisations}

To illustrate the robustness of the equivalence between repeated games and recursive joint simulation,
    we now show that the equivalence from \Cref{thm:equivalence} holds even when we consider a generalisation of the basic setting investigated in \Cref{sec:recursive-joint-simulation}.

So far, we have assumed that the probability $\simProb$ of running each subsequent layer of the recursive simulation is independent of the current depth, which might not be true in general.
For example, imagine that each recursive application of the simulation device must be initiated by a human overseer who physically pulls a lever on a machine.
At some uncertain point, this person will run out of patience and refuse to pull the lever any more -- their shift was supposed to end two hours ago! -- thus putting an end to further recursion.

We can formally model this by assuming that
    there is some sequence $(\simProb_t)_{t=0}^\infty$
    such that after $t$ simulations, the probability of the recursion going at least one level deeper is $\simProb_t$.
We denote the resulting game as $\recursive(\NFG, (\simProb_t)_{t=0}^\infty)$
    and use $\repeatedLastOnlyUnknown(\NFG, (\simProb_t)_{t=0}^\infty)$ to refer to the corresponding game with an uncertain number of repetitions and payoff in the last round only.
In this generalised setting, the equivalence from \Cref{thm:equivalence} is as follows:

\begin{restatable}[The equivalence under variable simulation probability]{proposition}{variableSimProb}\label{prop:variableSimProb}
    Suppose that $\prod_{t=0}^\infty \simProb_t = 0$.
    Then $\recursive(\NFG, (\simProb_t)_{t=0}^\infty)$ is
        realisation-equivalent to $\repeatedLastOnlyUnknown(\NFG, (\simProb_t)_{t=0}^\infty)$
        and strategically equivalent to $\repeated_\omega(\NFG, (\weight_t)_{t=0}^\infty)$,
    where
        \begin{align*}
            \weight_0 = 1 - \simProb_0, \ 
            \weight_1 = \simProb_0 (1 - \simProb_1), \ \dots \ , \ 
            \weight_t & = \simProb_0 \cdot \ldots \cdot \simProb_{t-1} \cdot ( 1 - \simProb_{t}) .
        \end{align*}
\end{restatable}

\begin{proof}[Proof sketch]
The first part is a straightforward generalisation of the proof of \Cref{lem:uncertain-recursive-realisation-equivalence}.
The second holds because the probability of stopping the recursive simulations in depth $t$ is equal to $\simProb_0 \cdot \ldots \cdot \simProb_{t-1} \cdot (1-\simProb_t)$.
\end{proof}

Another realistic complication is that there might be some hard limit on the maximum number of simulations, for example because each simulation comes at a cost and the agents have a limited budget.
This setting can be viewed as a special case of the game $\recursive(\NFG, (\simProb_t)_{t=0}^\infty)$ where $\simProb_t = 0$ for all $t$ beyond some $\repetitionCount \in \N$.
By \Cref{prop:variableSimProb}, this variant of recursive joint simulation becomes equivalent to a \textit{finitely} repeated game.
    For example, an RJS game with $(\simProb_t)_{t=0}^\infty = (0.8, 0.8, 0.8, 0, 0, \dots)$ would be strategically equivalent to a repeated game with 4 rounds and weights $(0.2, 0.16, 0.128, 0.512)$.
    Keep in mind that simulating with certainty ($\simProb_t=1$) until the deadline is unhelpful, since the corresponding weight will always be $\weight_t = 0$
        -- for example, an RJS game with $(\simProb_t)_{t=0}^\infty = (1, 1, 1, 0, 0, \dots)$ would be strategically equivalent to a repeated game with 4 rounds and weights $(0, 0, 0, 1)$
        (which would simply be equivalent to the original game, so RJS would not help at all).
In particular, this implies that these variants of recursive joint simulation run into the standard limitations of finitely repeated games\footnotemark{} (discussed in \Cref{sec:background}).
    \footnotetext{
        But recall that there are games where a version of the folk theorem holds even for finite repetition \citep{Benoit1985}.
        This means that recursive joint simulation can be beneficial in the actual world, where the agents do not have unbounded budgets.
    }






The following proposition is an extension of \Cref{thm:self-loc-beliefs-1} for the setting considered in this section:

\begin{restatable}[Equivalence of internal views under variable simulation probabilities]{proposition}{selfLocBeliefstwo}\label{thm:self-loc-beliefs-2}
\Cref{thm:self-loc-beliefs-1} also holds for the game given in Appendix~\ref{sec:generalisations}, if $\Pr$ is calculated with any regular method of assigning self-locating beliefs.
\end{restatable}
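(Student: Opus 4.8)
The plan is to reduce the claim to \Cref{thm:self-loc-beliefs-1} by showing that, at the infosets where a player chooses a base-game action, any regular belief method must reproduce exactly the reach-probability-proportional beliefs of the perfect-recall game. Concretely, I would proceed in three steps: (i) verify that the base-game action infosets are not absentminded; (ii) argue that at such an infoset a regular method assigns beliefs proportional to reach probabilities, \emph{irrespective} of the absentmindedness present elsewhere in the voluntary-simulation game; and (iii) observe that under the relevant strategy these reach probabilities coincide with those of $\recursive(\NFG, \simProb)$, so that the induced conditional distribution is exactly the one computed in \Cref{thm:self-loc-beliefs-1}.

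For step (i), fix any single trajectory (full path of play) and suppose it has total depth $\simsAbove = d$, i.e.\ the recursion succeeded $d-1$ times before being refused. On this trajectory the player acts once at each level $n \in \{0, 1, \dots, d-1\}$ (measuring depth from reality), and at level $n$ the player has observed exactly $d-1-n$ completed sub-simulations below them. Since $n \mapsto d-1-n$ is injective, the observations --- and hence the base-game infosets --- at the distinct levels of this trajectory are pairwise distinct. Thus no base-game infoset is hit twice on a single trajectory, so by definition these infosets are non-absentminded, as asserted in the text. (By contrast, the simulate-or-not infoset is revisited at every level and is therefore the unique source of absentmindedness.)

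For steps (ii)--(iii), fix the extended strategy $\sigma$ from \Cref{sec:sub:voluntary-sim}, under which all players always consent to simulate; on the realized path the recursion then continues with probability $\simProb$ and halts with probability $1-\simProb$ at each level, exactly as in $\recursive(\NFG, \simProb)$. I would then pass to the auxiliary game $\gameAlt$ obtained by replacing each simulate-or-not decision node with a chance node implementing $\sigma$'s (fixed) mixture over simulating. This leaves the reach probability of every node and the partition into base-game infosets unchanged, but removes the only source of absentmindedness, so $\gameAlt$ has no absentminded infosets. Applying regularity to $\gameAlt$ at the base-game infoset $I$ in which $\simsBelow = m$ sub-simulations have been observed, the belief over the node at depth $n$ (equivalently, over $\simsAbove = n+m+1$) is the natural conditional $\Pr(\cdot \mid I, \sigma)$, which is precisely the quantity evaluated in \Cref{thm:self-loc-beliefs-1}, yielding $\simProb^{n}(1-\simProb)$.

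The main obstacle is justifying, in steps (ii)--(iii), that moving from the voluntary-simulation game to $\gameAlt$ does not alter a regular method's belief at $I$. Regularity as defined only constrains a method on games that are \emph{globally} free of absentmindedness, whereas the voluntary game is not; what must be verified is that the belief a regular method assigns at a non-absentminded infoset depends only on the reach probabilities of that infoset's nodes, and not on absentminded structure elsewhere. For the two admissible methods this is exactly where GT (the self-indication assumption) and GDH (the minimum-reference-class self-sampling assumption) coincide: when each possible world contains at most one instance of the agent in $I$ --- which non-absentmindedness of $I$ guarantees --- both reduce to weighting the worlds through $I$ by their prior probability, i.e.\ to reach-probability-proportional beliefs, with no contribution from the repeated visits to the disjoint simulate-or-not infoset. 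Carefully unwinding these two definitions to confirm this insensitivity, and thereby legitimizing the replacement of the simulate nodes by chance, is the technical heart of the argument; once it is in place, the remaining computation is identical to that of \Cref{thm:self-loc-beliefs-1}.
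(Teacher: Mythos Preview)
Your proposal is correct and follows the same skeleton as the paper's proof: first establish that the base-game-action infoset is non-absentminded (your step (i) matches the paper's argument that each possible world intersects the infoset only once), then invoke regularity to justify reach-probability-proportional beliefs, then compute $\Pr(\simsAbove = n+m+1 \mid \simsAbove \geq m+1) = \simProb^n(1-\simProb)$.

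Where you differ is in rigor about the regularity step. The paper does \emph{not} pass through an auxiliary game $\gameAlt$; after showing the infoset is non-absentminded it simply writes ``Thus, the game involves no absentmindedness'' and applies the definition of regularity directly to the voluntary-simulation game. As you correctly observe, this is in tension with the paper's own remark (just before the proposition) that the simulate-or-not infoset \emph{is} absentminded, and with the literal wording of the regularity definition, which restricts attention to games globally free of absentmindedness. Your ``main obstacle'' paragraph identifies a genuine gap that the paper elides; your proposed resolution --- checking that GT and GDH depend only on the reach probabilities of nodes in $I$ and are insensitive to absentminded structure at disjoint infosets --- is the right way to close it, though it proves the result for those two methods rather than for an arbitrary regular method (which, given the definition as stated, is arguably all one can do). The final computation is identical in both proofs.
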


\section{Proofs}\label{sec:app:proofs}
\uncertainUncertain*

\begin{proof}
Clearly, any strategy profile in $\repeatedUnknown(\NFG, \simProb)$ can be used as a strategy profile in $\repeatedLastOnlyUnknown(\NFG, \simProb)$ and vice versa.
Let $\policy$ be a strategy profile in $\repeatedUnknown(\NFG, \simProb)$.
We will show that
    $
        \utility^{\repeatedLastOnlyUnknown(\NFG, \simProb)}_\pl(\policy)
        =
        \utility^{\repeatedUnknown(\NFG, \simProb)}_\pl(\policy)
    $.
For brevity, we use
    $
        \Pr \left[ (\action^0, \dots, \action^{k-1}) \mid \policy \right]
    $
    as a shorthand for 
    \begin{align*}
        \Pr \left[
            \textnormal{initial $k$ actions are } (\action^0, \dots, \action^{k-1})
            \mid
            \textnormal{players use $\policy$, game lasts $\geq k$ rounds}
        \right]
        .
    \end{align*}
In $\repeatedLastOnlyUnknown(\NFG, \simProb)$, we have
\begin{align*}
    & \E_\policy \left[
        \utility^{\repeatedLastOnlyUnknown(\NFG, \simProb)}_\pl(\policy)
    \right]
    \\
    & =
    \sum_{t=1}^\infty
        \Big(
            \Pr \left[
                \repeatedLastOnlyUnknown(\NFG, \simProb) \textnormal{ ends after $t$ rounds}
            \right]
                \ \cdot
            \\
            & \phantom{blablabla} \cdot \ 
            \sum_{\action \in \actions}
                \Pr \left[
                    \textnormal{$\action$ sampled by $\policy$ in the $t$-th round}
                \right]
                \utility^\game_\pl(\action)
        \Big)
        \\
    & = 
    \sum_{t=1}^\infty
        \simProb^{t-1} (1-\simProb)
            \sum_{(\action^0, \dots, \action^{t-1}) \in \actions^t}
                \Pr \left[ (\action^0, \dots, \action^{t-1}) \mid \policy \right]
                \utility^\game_\pl(\action^{t-1})
        \\
    & = 
    \sum_{t=0}^\infty
        \simProb^{t} (1-\simProb)
            \sum_{(\action^0, \dots, \action^{t}) \in \actions^{t+1}}
                \Pr \left[ (\action^0, \dots, \action^{t}) \mid \policy \right]
                \utility^\game_\pl(\action^{t}).
\end{align*}
Similarly, in $\repeatedUnknown(\NFG, \simProb)$,
    we have
\begin{align*}
    & \E_\policy \left[
        \utility^{\repeatedUnknown(\NFG, \simProb)}_\pl(\policy)
    \right]
    \\
    & =
    \sum_{t=1}^\infty
            \Big(
            \Pr \left[
                \repeatedUnknown(\NFG, \simProb) \textnormal{ ends after $t$ rounds}
            \right]
            \ \cdot
            \\
            & \phantom{biabiabia}
            \cdot
            \sum_{(\action^0, \dots, \action^{t-1}) \in \actions^t}
                \Pr\left[ (\action^0, \dots, \action^{t-1}) \mid \policy \right]
                \sum_{s = 0}^{t-1}
                    (1-\simProb)
                    \utility^\game_\pl(\action^s)
            \Big)
        \\
    & = 
    \sum_{t=1}^\infty
        \simProb^{t-1} (1-\simProb)
            \sum_{(\action^0, \dots, \action^{t-1}) \in \actions^t}
                \Pr \left[ (\action^0, \dots, \action^{t-1}) \mid \policy \right]
                \sum_{s = 0}^{t-1}
                    (1-\simProb)
                    \utility^\game_\pl(\action^s)
    .
\end{align*}
Note that the term
    $
        (1-\simProb) \utility^\game_\pl(\action^s)
    $,
    for fixed $s \geq 0$ and $(\action^0, \dots, \action^s) \in \actions^{s+1}$,
    appears in the whole expression multiple times --- once for every sequence of actions that extends $(\action^0, \dots, \action^s)$.
This observation allows us to further rewrite
    $
        \E_\policy \left[
            \utility^{\repeatedUnknown(\NFG, \simProb)}_\pl(\policy)
        \right]
    $
    as follows:
\begin{align*}
    & \sum_{s=0}^\infty
        \sum_{(\action^0, \dots, \action^s) \in \actions^{s+1}}
            (1-\simProb) \utility^\game_\pl(\action^s)
            \sum_{t=s+1}^\infty
                \Big( \simProb^{t-1} (1 - \simProb) \ \cdot
                \\
                & \phantom{blablablablablablablabla} \cdot \!\!\!\!\!\!\!\!\!\!\!\!
                \sum_{(\action^{s+1}, \dots, \action^{t-1}) \in \actions^{t-s}}
                    \Pr \left[ (\action^0, \dots, \action^s, \action^{s+1}, \dots \action^{t-1}) \mid \policy \right]
                \Big)
        \\
    & =
    \sum_{s=0}^\infty
        \sum_{(\action^0, \dots, \action^s) \in \actions^{s+1}}
            (1-\simProb) \utility^\game_\pl(\action^s)
            \sum_{t=s+1}^\infty
                \simProb^{t-1} (1 - \simProb)
                \Pr \left[ (\action^0, \dots, \action^s) \mid \policy \right]
        \\
    & =
    \sum_{s=0}^\infty
        \sum_{(\action^0, \dots, \action^s) \in \actions^{s+1}}
            (1-\simProb) \utility^\game_\pl(\action^s)
            (1 - \simProb)
            \Pr \left[ (\action^0, \dots, \action^s) \mid \policy \right]
            \sum_{t=s+1}^\infty
                \simProb^{t-1}
        \\
    & =
    \sum_{s=0}^\infty
        \sum_{(\action^0, \dots, \action^s) \in \actions^{s+1}}
            (1-\simProb) \utility^\game_\pl(\action^s)
            (1 - \simProb)
            \Pr \left[ (\action^0, \dots, \action^s) \mid \policy \right]
            \frac{\simProb^s}{1 - \simProb}
        \\
    & =
    \sum_{s=0}^\infty
        \simProb^s
        (1-\simProb)
        \sum_{(\action^0, \dots, \action^s) \in \actions^{s+1}}
            \Pr \left[ (\action^0, \dots, \action^s) \mid \policy \right]
            \utility^\game_\pl(\action^s)
    .
\end{align*}
Substituting $t$ for $s$, this shows that
    $
        \E_\policy \left[
            \utility^{\repeatedLastOnlyUnknown(\NFG, \simProb)}_\pl(\policy)
        \right]
    $
    $=$
    $
        \E_\policy \left[
            \utility^{\repeatedUnknown(\NFG, \simProb)}_\pl(\policy)
        \right]
    $,
    which concludes the proof.
\end{proof}

\uncertainRecursive*

\begin{proof}
First, we identify the terminal histories in the two games:
    For $t \geq 1$, let $(\action^0, \dots, \action^{t}) \in \actions^{t+1}$ be a sequence of joint actions in $\NFG$.
    In $\repeatedLastOnlyUnknown(\NFG, \simProb)$, this corresponds to
        the game ending after $t+1$ rounds,
        with joint action $\action^k$ played in the $(k+1)$-th round.
        In particular, this terminal history yields utility $\utility^\NFG_\pl(\action^{t})$.
    In $\recursive(\NFG, \simProb)$, this corresponds to
        the game recursively running $t$ simulations,
        with the joint action $\action^0$ played in the ``deepest'' simulation
            (i.e., the ``last'' one, in which no further simulation is performed),
        $\action^{t-1}$ played in the ``highest-level'' (i.e., the first) simulation,
        and $\action^{t}$ played on the ``reality level''
            (where the players' decision is informed by all simulations that have been run).
        In particular, this terminal history also yields the utility $\utility^\NFG_\pl(\action^{t})$.
    For convenience, we denote these terminal histories as
        $\leaf^{\repeatedLastOnlyUnknown(\NFG, \simProb)}_{(\action^0, \dots, \action^{t})}$,
        resp.
        $\leaf^{\recursive(\game, \simProb)}_{(\action^0, \dots, \action^{t})}$.
    
Given this identification, it follows that for any pair of policies
    \begin{align*}
        \policy_\pl :
            (\action^0, \dots, \action^{s-1}) \in \actions^s
            \mapsto 
            \policy(\action^0, \dots, \action^{s-1}) \in \Delta(\actions_\pl)
            ,
    \end{align*}
    the reach probability of
        $\leaf^{\repeatedLastOnlyUnknown(\NFG, \simProb)}_{(\action^0, \dots, \action^{t})}$ in $\repeatedLastOnlyUnknown(\NFG, \simProb)$
    is equal to the reach probability of
        $\leaf^{\recursive(\game, \simProb)}_{(\action^0, \dots, \action^{t})}$ in $\recursive(\game, \simProb)$.
Indeed, in $\repeatedLastOnlyUnknown(\NFG, \simProb)$, we have
\begin{align*}
    & \reachProb^\policy \left(
        \leaf^{\repeatedLastOnlyUnknown(\NFG, \simProb)}_{(\action^0, \dots, \action^{t})}
    \right)
        \\
    & = 
        \left(
            \prod_{s = 0}^{t-1}
                \prod_{\pl=1}^\playerCount
                    \policy_\pl(\action^s_\pl \mid \action^0, \dots, \action^{s-1})
                \Pr \left[ \textnormal{the game continues} \right]
        \right) \cdot
        \\
        & \phantom{ = } \phantom{blablabla}
        \cdot
        \left(
            \prod_{\pl=1}^\playerCount
                \policy_\pl(\action^{t}_\pl \mid \action^0, \dots, \action^{t-1})
            \Pr \left[ \textnormal{the game stops} \right]
        \right)
        \\
    & =
        \simProb^{t} (1-\simProb)
        \prod_{s = 0}^{t}
            \policy(\action^s \mid \action^0, \dots, \action^{s-1})
    .   
\end{align*}
Similarly, in $\recursive(\NFG, \simProb)$, we have
\begin{align*}
    \reachProb^\policy \left(
        \leaf^{\recursive(\game, \simProb)}_{(\action^0, \dots, \action^{t})}
    \right)
    & = 
        \left(
            \prod_{k = 1}^{t}
                \Pr \left[ \textnormal{recursive simulation is run} \right]    
        \right)
        \\
        & \phantom{ = } \phantom{bla} \cdot
        \Pr \left[ \textnormal{no simulation is run} \right]
        \cdot
        \\
        & \phantom{ = } \phantom{bla}
        \cdot
        \left(
            \prod_{s=0}^{t}
                \prod_{\pl=1}^\playerCount
                    \policy_\pl(\action^s_\pl \mid \action^0, \dots, \action^{s-1})
        \right)
        \\
    & =
        \simProb^{t} (1-\simProb)
        \prod_{s = 0}^{t}
            \policy(\action^s \mid \action^0, \dots, \action^{s-1})
    . 
\end{align*}
This shows that the two games are realisation-equivalent and concludes the proof.
\end{proof}

\variableSimProb*

\begin{proof}
It is a standard result that for $\simProb_t \in (0, 1)$, the product $\prod_{t=0}^\infty \simProb_t$ is equal to $0$ if and only if $\sum_{t=0}^\infty (1 - \simProb_t) = \infty$.
    This implies that the games $\repeatedLastOnlyUnknown(\NFG, (\simProb_t)_{t=0}^\infty)$ and $\repeated_\omega(\NFG, (\weight_t)_{t=0}^\infty)$ have well-defined rewards (if the sum diverged, the latter game might have infinite rewards).
Beyond this observation, the proof of the proposition is an immediate corollary of
First, note that by a telescopic sum argument, the sum of weights is equal to
    $\sum_{t=0}^{T} \weight_t = 1 - \prod_{t=0}^{T} \simProb_t$.
As a result, the assumption $\prod_{t=0}^\infty \simProb_t = 0$ is equivalent to
    the recursive simulation terminating with probability $1$,
    and to the weights $(\weight_t)_{t=0}^\infty$ summing to $1$.
In particular, the utilities in all three games are well-defined.

Given this observation, the proposition is an immediate corollary of the corresponding generalisations of \Cref{lem:uncertain-uncertain-strategic-equivalence} and \Cref{lem:uncertain-recursive-realisation-equivalence}.
Since the proofs of these generalisations are analogous to the proofs of the original lemmas, we omit them.
\end{proof}

\selfLocBeliefsone*

\begin{proof}
    This is a special case of \Cref{thm:self-loc-beliefs-2} (proved just below).
\end{proof}

\selfLocBeliefstwo*

\begin{proof}
Consider a player's information set of observing that the simulation from this point on went $m$ rounds deep, combined with a sequence of behaviour $\vec a(m)$ by the players within these rounds. A possible world consistent with this consists of a total simulation depth $n'+m$ together with a sequence of behaviour $\vec a(n'+m)$  across all rounds that coincides with $\vec a(m)$ in the last $m$ rounds (denote this as $a(n'+m)\restriction_{m}=a(m)$). Note that each such possible world intersects with the player's information set only once (namely, at the point where we are $n'$ deep in the simulation). Thus, the game involves no absentmindedness. Because by assumption the method of self-locating belief used satisfies regularity,
we may compute
the conditional probability of being at depth $n$ currently as (where the numerator and denominator sum the probabilities of possible worlds, conditional on $\sigma$):

\begin{align*}
    & \Big(
        \sum_{\underset{a(n+m+1)\restriction_{m}=a(m)}{a(n+m+1) \textnormal{ s.t.}}}
        \! \! \! \! \! 
        \Pr(\simsAbove = n+m+1) \Pr(a(n+m+1) \mid \simsAbove = n+m+1, \sigma)
    \Big)
        \ \div
        \\
    & \ \ \div
        \Big(
        \!
        \sum_{n' \geq 0}
            \! \! \!
            \sum_{\underset{a(n'+m+1)\restriction_{m}=a(m)}{a(n'+m+1) \textnormal{ s.t.}}}
            \! \! \! \! \! \! \! \! \! \! \! \! \! \! \!
            \Pr(\simsAbove = n'+m+1) \Pr(a(n'+m+1) \mid \simsAbove = n'+m+1, \sigma)
        \Big)
    \\
    & = \Pr(\simsAbove = n+m+1 \mid \simsAbove \geq m+1, \vec a(m), \sigma) \\
    & = \Pr(\simsAbove = n+m+1 \mid \simsAbove \geq m+1) \\
    & = \simProb^{n+m}(1-\simProb) / \simProb^m \\
    & = \simProb^{n}(1-\simProb)
    .
\end{align*}
\end{proof}

\end{appendices}

\bibliography{main}

\end{document}